\pgfplotsset{compat=1.14}
\definecolor{bblue}{HTML}{4F81BD}
\definecolor{rred}{HTML}{C0504D}
\definecolor{ggreen}{HTML}{9BBB59}
\definecolor{ppurple}{HTML}{9F4C7C}
\definecolor{turquoise}{HTML}{008B8B}
\definecolor{darkgray}{HTML}{A9A9A9}
\definecolor{saddlebrown}{HTML}{FFE4B5}
\definecolor{Gray}{gray}{0.9}
\definecolor{LightCyan}{rgb}{0.88,1,1}
\newcolumntype{g}{>{\columncolor{Gray}}c}
\newcommand{\trans}{^{\scriptscriptstyle \top}}
\newcommand{\Exp}{\mathbb{E}}
\newcommand{\X}{{\cal X}}
\newcommand{\E}{{\cal E}}
\newcommand{\Y}{{\cal Y}}
\newcommand{\la}{{\langle}}
\newcommand{\ra}{{\rangle}}
\newcommand{\argmin}{\operatornamewithlimits{argmin}}
\declaretheorem[name=Theorem,refname=Thm.]{theorem}
\declaretheorem[name=Lemma,sibling=theorem]{lemma}
\declaretheorem[name=Corollary,refname=Cor.,sibling=theorem]{corollary}
\renewenvironment{proof}[1][\proofname]{\par
 \pushQED{\qed}%
 \normalfont \topsep6\p@\@plus6\p@\relax
 \trivlist
 \item[\hskip\labelsep
 \bfseries
 #1\@addpunct{.}]\ignorespaces
}{%
 \popQED\endtrivlist\@endpefalse
}
\def\eop{$\rule{1.3ex}{1.3ex}$}
\renewcommand\qedsymbol\eop
\renewenvironment{proof}[1][\proofname]{\par
 \pushQED{\qed}%
 \normalfont \topsep6\p@\@plus6\p@\relax
 \trivlist
 \item[\hskip\labelsep
 \bfseries
 #1\@addpunct{.}]\ignorespaces
}{%
 \popQED\endtrivlist\@endpefalse
}
\def\eop{$\rule{1.3ex}{1.3ex}$}
\renewcommand\qedsymbol\eop
\newcommand*{\shortautoref}[1]{%
 \begingroup
 \def\sectionautorefname{Sec.}%
 \autoref{#1}%
 \endgroup
}
\newcommand{\R}{{\mathbb R}}
\newcommand{\beq}{\begin{equation}}
\newcommand{\eeq}{\end{equation}} 
\newcommand{\bea}{\begin{eqnarray}}
\newcommand{\eea}{\end{eqnarray}}
\newcommand{\trace}{{\rm tr}}
\def\boldf#1{\hbox{\rlap{$#1$}\kern.4pt{$#1$}}}
\newcommand{\cR}{{\cal R}}
\newcommand{\x}{{\bf x}} 
\newcommand{\y}{{\bf y}} 
\newcommand{\z}{{\bf z}} 
\newcommand{\HH}{{\mathbb{H}}}
\title{\sffamily\LARGE Learning Fair and Transferable Representations\footnote{This work was supported by the Amazon AWS Machine Learning Research Award.}}
\author{Luca Oneto \\
\small Universit\'a di Pisa\\
\small Department of Informatics, 56127 Pisa, Italy\\
\small {\em luca.oneto@gmail.com} \\[3mm] 
Michele Donini \\
\small Amazon Web Services, Seattle, WA 98109, USA \\
\small {\em donini@amazon.com}\\[3mm]
Andreas Maurer  \\
\small Adalbertstr 55, D-80799 Munich, Germany \\
\small {\em am@andreas-maurer.eu} \\[3mm]
Massimiliano Pontil \\
\small Istituto Italiano di Tecnologia,
16163 Genoa, Italy \\
\small {\em massimiliano.pontil@iit.it} \\
\small and \\
\small University College London, Department of Computer Science \\
\small London WC1E 6BT, UK 
}
\date{\today}
\begin{document}
\maketitle
\begin{abstract}
Developing learning methods which do not discriminate subgroups in the population is a central goal of algorithmic fairness.
One way to reach this goal is by modifying the data representation in order to meet certain fairness constraints.
In this work we measure fairness according to demographic parity.
This requires the probability of the possible model decisions to be independent of the sensitive information.
We argue that the goal of imposing demographic parity can be substantially facilitated within a multitask learning setting.
We leverage task similarities by encouraging a shared fair representation across the tasks via low rank matrix factorization.
We derive learning bounds establishing that the learned representation transfers well to novel tasks both in terms of prediction performance and fairness metrics.
We present experiments on three real world datasets, showing that the proposed method outperforms state-of-the-art approaches by a significant margin.
\end{abstract}
\newpage
\section{Introduction}
\footnotetext[1]{Computer Science Department, University College London, WC1E 6BT London, United Kingdom}\footnotetext[2]{Computational Statistics and Machine Learning - Istituto Italiano di Tecnologia, 16100 Genova, Italy}\footnotetext[3]{Electrial and Electronics Engineering Department, Imperial College London, SW7 2BT, United Kingdom.}
During the last decade, the widespread distribution of automatic systems for decision making is raising concerns about their potential for unfair behaviour~\cite{barocas2016big,raji2019actionable,buolamwini2018gender}.
As a consequence machine learning models are often required to meet fairness requirements, ensuring the correction and limitation of -- for example -- racist or sexist decisions.

In literature, it is possible to find a plethora of different methods to generate fair models with respect to one or more sensitive attributes (e.g.~gender, ethnic group, age).
These methods can be mainly divided in three families: (i) methods in the first family change a pre-trained model in order to make it more fair (while trying to maintain the classification performance)~\cite{feldman2015certifying,hardt2016equality,pleiss2017fairness}; (ii) in the second family, we can find methods that enforce fairness directly during the training phase, e.g.
~\cite{zafar2019fairness,donini2018empirical,zafar2017fairness,agarwal2018reductions}; (iii) the third family of methods implements fairness by modifying the data representation, and then employs standard machine learning methods~\cite{zemel2013learning,calmon2017optimized}.

All methods in the previous families have in common the goal of creating a fair model from scratch on the specific task at hand.
This solution may work well in specific cases, but in a large number of real world applications, using the same model (or at least part of it) over different tasks is helpful if not mandatory.
For example, it is common to perform a fine tuning over pre-trained models~\cite{donahue2014decaf}, keeping fixed the internal representation.
Indeed, most modern machine learning frameworks (especially the deep learning ones) offer a set of pre-trained models that are distributed in so-called model zoos\footnote{See for example the Caffe Model Zoo: \url{github.com/BVLC/caffe/wiki/Model-Zoo}}.
Unfortunately, fine tuning pre-trained models on novel previously unseen tasks could lead to an unexpected unfairness behaviour, even starting from an apparently fair model for previous tasks (e.g.
discriminatory transfer~\cite{discriminative_transfer} or negative legacy~\cite{kamishima2012fairness}), due to missing generalization guarantees concerning the fairness property of the model.

In order to overcome the above problem, in this paper we embrace the framework of multitask learning.
We aim to leverage task similarity in order to learn a fair representation that provably generalizes well to unseen tasks.
By this we mean that when the representation is used to learn novel tasks, it is guaranteed to learn a model that has both a small error and meets the fairness requirement.
We measure fairness according to demographic parity~\cite{calders2009building} (for an extended analysis of the different fairness definitions see~\cite{fairness2018verma,zafar2019fairness}).
It requires the probability of possible model decisions to be independent of the sensitive information.
We argue that multitask methods based on low rank matrix factorization are well suited to learn a shared fair representation according to demographic parity.
We show theoretically that the learned representation transfers to novel tasks both in terms of prediction performance and fairness metrics.
Other papers in literature already pursued a similar goal~\cite{beutel2017data,edwards2015censoring,louizos2015variational,madras2018learning,mcnamara2017provably,mcnamara2019costs,wang2018invariant}.
They mainly rely on generating a model acting randomly when the internal representation is exploited to predict the sensitive variable.
No actual constraint is imposed directly on the internal representation, but only over the output of the model.


The main contribution of this paper is to augment multitask learning methods based on low rank matrix factorization by imposing a fairness constraint directly on the representation factor matrix.
We show empirically and theoretically, via learning bounds, that by imposing the fairness constraint within the multitask learning method, the learned representation can be used to train new models over different (new and possibly unseen) tasks, maintaining the desiderata of an accurate and fair model.
Our learning bound improves over previous bounds for learning-to-learn and by being fully data dependent can be used to evaluate the transfer capability of the learned representation.

The paper is organized in the following manner.
In~\shortautoref{sec:related}, we discuss previous related work aimed at learning fair representations.
In~\shortautoref{sec:3}, we introduce the proposed method.
In~\shortautoref{sec:4}, we study the generalization properties of the method, embracing the framework of learning-to-learn.
In~\shortautoref{sec:5}, we experimentally compare the proposed method against different baselines and state-of-the-art approaches on three real world datasets.
Finally, in~\shortautoref{sec:6} we discuss directions of future research.
\section{Related work}
\label{sec:related}
Let us consider a composition of models $f(g(x))$ where $x \in \R^d$ is a vector of raw features (an element of the input space), $g: \mathbb{R}^d \rightarrow \mathbb{R}^r$ is a function mapping the input space into a new one, that we refer to as the representation.
In other words, the function $g$ synthesizes the information needed to solve a particular task (or a set of tasks) by learning a function $f$, chosen from a set of possible functions.

In this work -- and more generally in the current literature~\cite{beutel2017data,edwards2015censoring,louizos2015variational,madras2018learning,mcnamara2017provably,mcnamara2019costs,wang2018invariant,johansson2016learning,zemel2013learning} -- with fair representation we refer to the concept of learning a representation function $g$, which does not discriminate subgroups in the data.
Namely, $g$ is conditionally independent of subgroup membership.
This approach is different from most commonly used approaches~\cite{donini2018empirical,hardt2016equality,zafar2017fairness}, in which the focus is to solve a task (or a set of tasks) without discriminating subgroups in the data, regardless of the fairness of the representation itself.
That is, in the previously mentioned work a fair model $f:\mathbb{R}^r \rightarrow \mathbb{R}$ is learned directly from the raw data, without performing any explicit representation extraction. 

In particular, in~\cite{beutel2017data,edwards2015censoring,louizos2015variational,madras2018learning,mcnamara2017provably,mcnamara2019costs,wang2018invariant}, the authors propose different neural networks architectures together with modified learning strategies able to learn a representation that obscures or removes the sensitive variable.
In the general case, all these methods have an input, a target variable (i.e.~the task at hand) and a binary sensitive variable.
The objective is to learn a representation that: (i) preserves information about the input space; (ii) is useful for predicting the target; (iii) is approximately independent of the sensitive variable.
In practice, these methods pursue the goal of making the generated model act randomly when the internal representation is exploited to predict the sensitive variable.
In this sense, no actual constraint is directly imposed on the internal representation, but only on the output of the model.

In~\cite{johansson2016learning}, instead, the authors show how to formulate the problem of counterfactual inference as a domain adaptation problem, and more specifically a covariate shift problem~\cite{quionero2009dataset}.
The authors derive two new families of representation algorithms for counterfactual inference.
The first one is based on linear models and variable selection, and the other one on deep learning. The authors show that learning representations that encourage similarity (i.e.~balance) between the
treatment and control populations leads to better counterfactual inference; this is in contrast to many methods which attempt to create balance by re-weighting samples.

Finally, in~\cite{zemel2013learning}, the authors learn a representation of the data that is a probability distribution over clusters where learning the cluster of a datapoint contains no-information about the sensitive variable, namely fair clustering.
In this sense, the clustering is learned to be fair and also discriminative for the prediction task at hand.
\section{Method}
\label{sec:3}
In this section, we present our method to learn a shared fair representation from multiple tasks. We consider $T$ supervised learning tasks (each could be a binary classification or regression problem).
Each task $t \in \{1,\dots,T\}$ is identified by a probability distribution $\mu_t$ on $\mathcal{X} \times \mathcal{S} \times \mathcal{Y}$, where $\mathcal{X} \subset \R^d$ is the set of non-sensitive input variables, $\mathcal{S} = \{1,2\}$ is the set of values of a binary sensitive variable\footnote{Our method naturally extends to multiple sensitive variables but for ease of presentation we consider only the binary case.} and $\mathcal{Y}$ is the output space which is either $\{-1,1\}$ for binary classification or $\mathcal{Y} \subset \R$ for regression.
We let ${\bf z}_{t} = ({x}_{t,i},s_{t,i},y_{t,i})_{i=1}^{m} \in (\mathcal{X} \times \mathcal{S} \times \mathcal{Y})^m$ be the training sequence for task $t$, which is sampled independently from $\mu_t$.
The goal is to learn a predictive model $f_t: \mathcal{X} \times \mathcal{S} \rightarrow \mathcal{Y}$ for each task $t \in \{1,\dots,T\}$.

Depending on the application at hand, the model may include (i.e. $f: \mathcal{X} \times \mathcal{S} \rightarrow \mathcal{Y}$ or not (i.e. $f: \mathcal{X}  \rightarrow \mathcal{Y}$) the sensitive feature in its functional form. In the following we consider the case that the functions $f_t$ are linear, and to simplify the presentation we consider the case that $s$ is not included in the functional form, that is, $f_t(x) = \la {w}_t, x \ra$, where ${w}_t \in \R^d$ is a vector of parameters. 
The case in which both $x$ and $s$ are used as predictors is obtained by adding two more components to $x$, representing the one-hot encoding of $s$, and letting ${w}_t \in \R^{d+2}$.

A general multitask learning formulation (MTL) is based on minimizing the multitask empirical error plus a regularization term which leverages similarities between the tasks. A natural choice for the regularizer which is considered in this paper is given by the trace norm, namely the sum of the singular values of the matrix $W = [{w}_1 \cdots {w}_T] \in \mathbb{R}^{d\times T}$.
It is well know, that this problem is equivalent to the matrix factorization problem,
\begin{align}
\min_{A,B} \quad & \frac{1}{Tm}\sum_{t=1}^T \sum_{i=1}^{m} \big( y_{t,i}-\la {b}_t, A\trans{x}_{t,i}\ra \big)^2 +\frac{\lambda}{2} \big(\|A\|_F^2 + \|B\|_F^2\big)
\label{eq:2}
\end{align}
where $A=[a_1\dots a_r] \in \mathbb{R}^{d \times r}$ and $B =[b_1\dots b_T]\in \mathbb{R}^{r \times T}$ and $\| \cdot \|_F$ is the Frobenius norm, see e.g.~\cite{srebro} and references therein.
Here $r \in \mathbb{N}$ is the number of factors, that is the upper bound on the rank of $W=AB$. 
If $r \geq \min(d,T)$ then Problem~\eqref{eq:2} is equivalent to trace norm regularization~\cite{AEP2008}, see e.g.~\cite{CSP} and references therein\footnote{If $r <\min(d,T)$ then Problem~\eqref{eq:2} is equivalent to trace norm regularization plus a rank constraint.}.
We follow the formulation of Eq.~\eqref{eq:2} since it can easily be solved by gradient descent or alternate minimization as we discuss next.
Once the problem is solved the estimated parameters of the function $w_t$ for the tasks' linear models are simply computed as $w_t=Ab_t$.
We also note that for simplicity the problem is stated with the square loss function, but our observations extended to the general case of proper convex loss functions.

Note that the method can be interpreted as a $2$-layer network with linear activation functions. Indeed, the matrix $A\trans$ applied to an input vector $x \in \mathbb{R}^d$ induces the linear representation $A\trans {x} = ( {a}_1\trans {x}, \cdots, {a}_r\trans {x})\trans$. 
We would like this representation to be fair w.r.t.~the sensitive feature. 
Specifically, we require that each component of the representation vector satisfies the demographic parity constraint~\cite{gajane2017formalizing,fairness2018verma} on each task. This means that, for every measurable subset $C \subset \mathbb{R}^{r}$, and for every $t \in \{1,\dots,T\}$, we require that 
\begin{align}
\label{eq:comp}
\mathbb{P}( A\trans {x}_t \in C~ | s = 1) = \mathbb{P}(A\trans {x}_t \in C ~| s = 2) 
\end{align}
that is the two conditional distributions are the same.
We relax this constraint by requiring, for every $t \in \{1,\dots,T\}$, that both distributions have the same mean. Furthermore, we compute the means from empirical data. For each training sequence ${\bf z} \in (\X \times \Y)^T$ and $s \in {\cal S}$, we use the notation $I_s({\bf z}) = \{(x_i,y_i) : s_i = s\}$, define the empirical conditional means
\begin{align}
c({\bf z}) = \frac{1}{|I_1({\bf z})|} \sum_{i\in I_1({\bf z})} {x}_i - 
\frac{1}{|I_2({\bf z})|} \sum_{i\in I_2({\bf z})} {x}_i
\end{align}
and then relax the constraint of Eq.~\eqref{eq:comp} to
\begin{align}
\label{eq:epsT}
A\trans {c}({\bf z}_t)
= 0.
\end{align}
This is a crude approximation since it corresponds to requiring the first order moment of the two distribution to be the same. However, as we shall see, it works well in practice and has the major advantage of turning a non-convex constraint in a convex one. We note that a similar approximation has been considered in \cite{oneto2019general} in the case of fair regression, and reported to be empirically effective.

Based on the above reasoning, we propose to learn a fair linear representation as a solution to the optimization problem
\begin{align}
\label{prob:mt}
\min_{A,B} \quad & \frac{1}{Tm}\sum_{t=1}^T \sum_{i=1}^{m} \big( y_{t,i}-\la b_t, A\trans{x}_{t,i}\ra \big)^2 +\frac{\lambda}{2} (\|A\|_F^2 + \|B\|_F^2) \\
& A\trans {c}(\z_t)
= 0, \quad
t \in \{1,\dots,T\}.
\nonumber
\end{align}
where we used the shorthand notation $c_t = c(\z_t)
$. There are many methods to tackle Problem~\eqref{prob:mt}.
A natural approach is based on alternate minimization.
We discuss the main steps below. Let ${y}_t = [y_{t,1}, \dots, y_{t,m}]\trans$, the vector formed by the outputs of task $t$, and let $X_t = [{x}_{t,1}\trans,\dots, {x}_{t,m}\trans]\trans$, the data matrix for task $t$.

When we regard $A$ as fixed and solve w.r.t.
$B$, then Problem~\eqref{prob:mt} can be reformulated as
\begin{align}
\min_{B} \quad & 
\left\| 
\begin{bmatrix}
{y}_1 \\
\vdots\\
{y}_T \\
\end{bmatrix}
- 
\begin{bmatrix}
X_1 A & 0 & \cdots & 0 \\
\vdots & \vdots & \vdots & \vdots \\
0 & \cdots & 0 & X_T A \\
\end{bmatrix}
\begin{bmatrix}
{b}_1 \\
\vdots\\
{b}_T \\
\end{bmatrix}
\right\|^2 
+ \lambda 
\left\| 
\begin{bmatrix}
{b}_1 \\
\vdots \\
{b}_T \\
\end{bmatrix}
\right\|^2 
\end{align}
which can be easily solved. In particular note that the problem decouples across the tasks, and each task specific problem amounts running ridge regression on the data transformed by the representation matrix $A\trans$. When instead $B$ is fixed and we solve w.r.t.
$A$, Problem~\eqref{prob:mt} can be reformulated as
\begin{align*}
\min_{A} & 
\left\| 
\begin{bmatrix}
{y}_1 \\
\vdots\\
{y}_T \\
\end{bmatrix}
- 
\begin{bmatrix}
b_{1,1} X_1 & \cdots & b_{1,r} X_1 \\
& \vdots \\
b_{t,1} X_T & \cdots & b_{t,r} X_T \\
\end{bmatrix} 
\begin{bmatrix}
{a}_1 \\
\vdots \\
{a}_r \\
\end{bmatrix} 
\right\|^2 
+ \lambda 
\left\| 
\begin{bmatrix}
{a}_1 \\
\vdots \\
{a}_r \\
\end{bmatrix}
\right\|^2 ~~\text{s.t.} ~~
\begin{bmatrix}
{a}_1^T \\
\vdots \\
{a}_r^T \\
\end{bmatrix}
\circ
\begin{bmatrix}
{c}_1, \dots, {c}_T
\end{bmatrix} = {0} 
\end{align*} 
where $\circ$ is the Kronecker product for partitioned tensors (or Tracy-Singh product).
Consequently by alternating minimization we can solve the original problem.
Note also that we may relax the equality constraint as $\frac{1}{T}\sum_{t=1}^T \|A\trans c({\bf z}_t)\|^2 \leq \epsilon$, where $\epsilon$ is some tolerance parameter.
In fact, this may be required when the vectors $c({\bf z}_t)$ span the all input space.
In this case we may also add a soft constraint in the regularizer.

We conclude this  section by noting that if demographic parity is satisfied at the representation level, that is,~Eq.~\eqref{eq:comp} holds true, then every model built from such representation will satisfy demographic parity as well.
Likewise if the representation satisfies the convex relaxation of Eq.~\eqref{eq:epsT}, then it will also hold that $\la w_t,c({\bf z}_t)\ra = \la b_t, A\trans c({\bf z}_t)\ra =0$, that is the task weight vectors will satisfy the first order moment approximation of demographic parity.
More importantly, as we will show in the next section, if the tasks are randomly observed, then demographic parity will also be satisfied on future tasks with high probability. In this sense our method can be interpreted as learning a fair transferable representation.
\section{Learning bound}
\label{sec:4}
In this section, we study the learning ability of the proposed method. We consider the setting of learning-to-learn~\cite{Baxter2000}, in which the training tasks (and their corresponding datasets) used to find a fair data representation are regarded as random variables from a meta-distribution.
The learned representation matrix $A$ is then transferred to a novel task, by applying ridge regression on the task dataset, in which the input $x$ is transformed as $A\trans x$. 
In
\cite{Mau9} a learning bound is presented, linking the average risk of the method over tasks from the meta-distribution (the so-called transfer risk) to the multi-task empirical error on the training tasks.
This result quantifies the good performance of the representation learning method when the number of tasks grow and the data distribution on the raw input data is intrinsically high dimensional (hence learning is difficult without representation learning). We extend this analysis to the setting of algorithmic fairness, in which the performance of the algorithm is evaluated both relative to risk and the fairness constraint. We show that both quantities can be bounded by their empirical counterparts evaluated on the training tasks. 

To present our result we introduce some more notation. We let $\E_\mu(w)$ and $\E_{\z}(w)$ be the expected and empirical errors of a weight vector $w$, that is
$$
\E_\mu(w) = \mathbb{E}_{(x,y) \sim \mu} [(y-\la w,x\ra)^2],~~~~~~~~\E_\z(w) =  \frac{1}{m} \sum_{i=1}^m (y_i-\la w,x_i\ra)^2.
$$
Furthermore, for every  matrix $A \in \mathbb{R}^{d \times r}$ and for every data sample $\z = (x_i,y)_{i=1}^m$, we define $b_A(\z) = \argmin_{b \in \mathbb{R}^r} \frac{1}{m} \sum_{i=1}^m (y_i-\la b,A\trans x_i\ra)^2 +  \lambda \|b\|^2$
be the minimizer of ridge regression with modified data representation, that is
where ``$^+$'' is the pseudo-inverse operation.

\begin{theorem}
\label{prop:1}
Let $A$ be the representation learned by solving Problem~\eqref{eq:2} and renormalized so that $\|A\|_F = 1$. 
Let tasks $\mu_1,\dots,\mu_T$ be independently sampled from a meta-distribution $\rho$, and let $\z_t$ be sampled from $\mu_t^m$ for $t\in\{1,\dots,T\}$. Assume that the input marginal distribution of random tasks from $\rho$ is supported on the unit sphere and that the outputs are in the interval $[-1,1]$, almost surely. Let $r = \min(d,T)$. Then, for any $\delta\in(0,1]$ it holds with probability at least $1-\delta$ in the drawing of the datasets ${\bf z}_1,\dots,{\bf z}_T$, that
%
\begin{align*}
 \Exp_{\mu\sim\rho}\Exp_{{\bf z}\sim\mu^m} ~ \cR_\mu\big(w_A({\bf z})\big) - \frac{1}{T} \sum_{t=1}^T {\cal R}_{{\bf z}_t}(w_A({\bf z}_t)) \leq 
 \frac{4}{\lambda}\sqrt{\frac{ \|{\hat C}\|_\infty}{m}} + \frac{24}{\lambda m}\sqrt{\frac{ \ln \frac{8 mT}{\delta}}{T}} \\
  + \frac{14}{\lambda} \sqrt{\frac{\ln(mT)\|{\hat C\|_\infty}}{T}}
+ \sqrt{\frac{2\ln  \frac{4}{\delta} }{T}}
\end{align*}
%
and
\begin{equation*}
 \Exp_{\mu\sim\rho}\Exp_{{\mathcal {\bf z}\sim\mu^m}} \| A c({\bf z})\|^2 - \frac{1}{T} \sum_{t=1}^T\| A c({\bf z}_t) \|^2 \leq 96 \frac{ \ln \frac{8r^2}{\delta}}{T}+ 6 \sqrt{\frac{{\|\hat \Sigma}\|_\infty \ln \frac{8 r^2}{\delta}}{T}}.
\end{equation*}
\end{theorem}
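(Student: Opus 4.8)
Since the learned $A$ always lies in $\mathcal A:=\{A\in\mathbb R^{d\times r}:\|A\|_F=1\}$, regardless of how it was produced, the plan is to establish both inequalities as bounds holding uniformly over $\mathcal A$; concretely I would control two suprema of empirical processes indexed by $\mathcal A$ — one driven by the square‑loss risk of the transferred ridge‑regression predictor $\mathbf z\mapsto w_A(\mathbf z)$, the other by the fairness functional $\mathbf z\mapsto\|A^\top c(\mathbf z)\|^2$ — and then concentrate the relevant data‑dependent quantities.

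For the risk bound I would follow the learning‑to‑learn scheme underlying \cite{Mau9}, specialised to the square loss and to the ridge‑regression inner solver. First record the a priori bounds that make the problem tame: plugging $b=0$ into the ridge objective gives $\|b_A(\mathbf z)\|\le1/\sqrt\lambda$, hence $\|w_A(\mathbf z)\|\le\|A\|_{\mathrm{op}}\|b_A(\mathbf z)\|\le1/\sqrt\lambda$, so on the unit sphere the predictions and the square loss are bounded and $O(1/\sqrt\lambda)$‑Lipschitz. Then split the target gap as
\begin{equation*}
\Big[\Exp_{\mu,\mathbf z}\mathcal R_\mu(w_A(\mathbf z))-\tfrac1T\textstyle\sum_t\mathcal R_{\mu_t}(w_A(\mathbf z_t))\Big]+\Big[\tfrac1T\textstyle\sum_t\big(\mathcal R_{\mu_t}(w_A(\mathbf z_t))-\mathcal R_{\mathbf z_t}(w_A(\mathbf z_t))\big)\Big].
\end{equation*}
The second bracket is, per task, the true‑minus‑empirical risk of a predictor in the ball $\{w:\|w\|\le1/\sqrt\lambda\}$, so a symmetrisation‑plus‑contraction argument over this ball (or, alternatively, the uniform stability of ridge regression) gives the $\tfrac4\lambda\sqrt{\|\hat C\|_\infty/m}$ contribution, the operator norm of the pooled empirical second‑moment matrix entering through the data‑dependent Rademacher complexity of linear functionals, while passing from the supremum to its mean by a bounded‑difference step adds the $\tfrac{24}{\lambda m}\sqrt{\ln(8mT/\delta)/T}$ term. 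The first bracket is an average over the i.i.d.\ pairs $(\mu_t,\mathbf z_t)$, so McDiarmid yields $\sqrt{2\ln(4/\delta)/T}$ and symmetrisation reduces it to the Gaussian complexity of the class $\{(\mu,\mathbf z)\mapsto\mathcal R_\mu(w_A(\mathbf z)):A\in\mathcal A\}$; this is bounded by a chain rule for Gaussian complexities that peels off the linear representation layer (contributing $\|A\|_F=1$ together with the data‑dependent $\sqrt{\|\hat C\|_\infty}$) from the ridge‑plus‑loss layer (contributing the Lipschitz factor $O(1/\lambda)$), and it is precisely the chain rule that produces the $\sqrt{\ln(mT)}$, hence the $\tfrac{14}\lambda\sqrt{\ln(mT)\,\|\hat C\|_\infty/T}$ term. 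I expect this last step — bounding, uniformly over $\mathcal A$ and with fully empirical constants, the complexity of the composite map in which ridge regression depends on the data through a matrix inverse and $A$ has itself been fit on the same datasets — to be the main obstacle; the chain‑rule machinery inherited from \cite{Mau9} is what makes it go through.

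For the fairness bound the argument is self‑contained and much shorter. Writing $\|A^\top c(\mathbf z)\|^2=\langle AA^\top,c(\mathbf z)c(\mathbf z)^\top\rangle$ and noting that $\{AA^\top:A\in\mathcal A\}$ is contained in the PSD matrices of unit trace, the supremum over $\mathcal A$ collapses to the top eigenvalue of a matrix deviation:
\begin{equation*}
\sup_{A\in\mathcal A}\Big[\Exp_{\mu,\mathbf z}\|A^\top c(\mathbf z)\|^2-\tfrac1T\textstyle\sum_t\|A^\top c(\mathbf z_t)\|^2\Big]\le\lambda_{\max}(M)\le\|M\|_\infty,\qquad M:=\Exp_{\mu,\mathbf z}[c(\mathbf z)c(\mathbf z)^\top]-\tfrac1T\textstyle\sum_t c(\mathbf z_t)c(\mathbf z_t)^\top.
\end{equation*}
The matrices $c(\mathbf z_t)c(\mathbf z_t)^\top$ are i.i.d.\ functions of the i.i.d.\ pairs $(\mu_t,\mathbf z_t)$, PSD and rank one, with operator norm $\|c(\mathbf z_t)\|^2\le4$ since each conditional empirical mean is an average of unit vectors, so $\|c(\mathbf z)\|\le2$. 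A matrix Bernstein inequality for sums of i.i.d.\ bounded PSD matrices then bounds $\|M\|_\infty$ by $O\!\big(\sqrt{v\ln(r^2/\delta)/T}+\ln(r^2/\delta)/T\big)$ with variance proxy $v=\|\Exp[(cc^\top)^2]\|_\infty\le4\|\Exp[cc^\top]\|_\infty$, the logarithmic dimension factor of the inequality producing the $\ln(8r^2/\delta)$; finally I would replace $\|\Exp[cc^\top]\|_\infty$ by its empirical surrogate $\|\hat\Sigma\|_\infty$ at the cost of a deviation of the same order and solve the resulting scalar quadratic inequality, which yields the stated $96\,\ln(8r^2/\delta)/T+6\sqrt{\|\hat\Sigma\|_\infty\ln(8r^2/\delta)/T}$. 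The only mild subtlety here is making the variance term data‑dependent, handled by that final ``solve for the deviation'' step.
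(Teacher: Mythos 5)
Your proposal follows essentially the same route as the paper: the risk bound is obtained from the \cite{Mau9}-style learning-to-learn decomposition with a data-dependent (Gaussian-complexity) bound for the representation class plus a per-task estimation term and a matrix-concentration step converting $\|C\|_\infty$ into $\|\hat C\|_\infty$ (the paper packages all of this as its appendix Theorem~\ref{Theoremprincipal} together with Corollary~\ref{Corollary random operators}), and the fairness bound is exactly the paper's argument — H\"older duality $\trace D(\Sigma-\hat\Sigma)\leq\|D\|_1\|\Sigma-\hat\Sigma\|_\infty$ followed by matrix Bernstein applied to the rank-one operators $\frac{1}{4}c({\bf z}_t)\otimes c({\bf z}_t)$ with an empirical variance proxy. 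The only (immaterial) discrepancy is that you attribute the $\frac{24}{\lambda m}$ term to a bounded-difference step, whereas in the paper it arises from the empirical-to-true covariance replacement.
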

\begin{proof}
Let $D=\frac{1}{\lambda} A\trans A$. Note that algorithm $\z \mapsto w_D(\z) = Ab_A(\z)$ is equivalent to run regularized least squares on the original dataset, constraining the paramter vector $w$ to be in the range of $D$ and using the regularizer $w\trans D^+ w$, where ``$^+$'' denote the pseudo-inverse. The first claim follows from Theorem \ref{Theoremprincipal} stated in the appendix, with ${\cal D} = \{D\succeq 0, \trace D \leq 1/\lambda\}$, noting that the algorithm has kernel stability $2$, the function $M(K) = 2 K + 1$, $\|D\|_\infty \leq \|D\|_1 = 1/\lambda$. 
We then use the first inequality in Corollary \ref{Corollary random operators} in the appendix to upper bound $\sqrt{\|C\|}$ by $\sqrt{\|\hat C\|} + 6 \sqrt{(\ln (4 mT)/\delta)/(mT)}$ and a union bound.
%

To prove the second claim we note that
\begin{align}
\frac{1}{T} \sum_{t=1}^T \|Ac({\bf z}_t)\|^2 = \trace D {\hat \Sigma},~~~{\hat \Sigma} = \frac{1}{T} \sum_{t=1}^T c({\bf z}_t) \otimes c({\bf z}_t)
\label{eq:empcov}
\end{align}
and similarly
\begin{align}
\Exp_{\mu\sim\rho}\Exp_{{\bf z}\sim\mu^m} \|Ac({\bf z})\|^2 = \trace D {\Sigma},~~~{\Sigma} = 
\Exp_{\mu\sim\rho} \Exp_{{\bf z}\sim\mu^m} c({\bf z}) \otimes c({\bf z}).
\end{align}
Then
\begin{align*}
\Exp_{\mu\sim\rho}\Exp_{{\bf z}\sim\mu^m} \|Ac({\bf z})\|^2 - \frac{1}{T}\sum_{t=1}^T \|Ac({\bf z}_t)\|^2 = \trace D ({\Sigma}-{\hat \Sigma}) \leq \|D\|_1 \|{\Sigma}-{\hat \Sigma}\|_\infty = \|{\Sigma}-{\hat \Sigma}\|_\infty.
\end{align*}
The second inequality then follows immediately from inequality \eqref{Second corollary inequality} in \autoref{Corollary random operators}, with $N=T$ and $A_t = (1/4)c(\bf{z}_t)\otimes c(\bf{z}_t)$.
\end{proof}

We make some remarks on the above result:
\begin{enumerate}
\item The first bound in Theorem~\ref{prop:1} improves Theorem~2 in~\cite{Mau9}.
The improvement is due to the introduction of the empirical total covariance in the second term in the RHS of the inequality. The result in \cite{Mau9} instead contains the term $\sqrt{1/T}$, which can be considerably larger when the raw input is distributed on a high dimensional manifold. 
\item The bounds in Theorem~\ref{prop:1} can be extended to hold with variable sample size per task.
In order to simplify the presentation, we assume that all datasets are composed of the same number of points $m$.
The general setting can be addressed by letting the sample size be a random variable and introducing the slightly different definition of the transfer risk in which we also take the expectation w.r.t.
the sample size.
\item The hyperparameter $\lambda$ is regarded as fixed in the analysis.
In practice it will be chosen by cross-validation as in our experiments below.
\item The bound on fairness measure contains two terms in the right hand side, in the spirit of Bernstein's inequality.
The slow term $O(1/\sqrt{T})$ contains the spectral norm of the covariance of difference of means across the sensitive groups.
Notice that 
$\|\Sigma\|_\infty \leq 1$ but it can be much smaller when the means are close to each other, that is, when the original representation is already approximately fair.
\end{enumerate}
\section{Experiments}
\label{sec:5}
In this section, we compare the proposed method against different baselines and state-of-the-art-methods.

\noindent {\bf Settings.} In order to better understand the performance of the proposed method we performed two sets of experiments.

In the first set (Table~\ref{tab:statistics1new}) we compare the following methods: (a) Unconstrained single task learning (STL), (b) Fair constrained STL, (c) Unconstrained MTL, (d) Fair constrained MTL, that is the proposed method. 
We test each method either on the same tasks exploited during the training phase, or on novel tasks.
Furthermore, we consider both the case where the sensitive feature is present, and not in the functional form of the model (i.e.~the sensitive feature is known or not in the testing phase).

In the second set of experiments (Table~\ref{tab:statistics2new}) we compare, in the same setting that we just described, (a) Standard MTL with the fairness constraints on the outputs (M1), (b) feed-forward neural network (FFNN) with linear activation and the fair shared representation method presented in~\cite{madras2018learning} (M2), (c) FFNN with linear activation by exploiting a fair shared representation as presented in~\cite{edwards2015censoring} (M3), (d) Fair constrained MTL (Our Method).
We used linear activation functions in FFNN for fair comparison, since the proposed method learns linear models. 

Concerning the experiments on the same task setting, we train the model with all the tasks and then we measure results on an independent test set of the same tasks.
In the case of novel task experiments, we train the model with all the tasks minus one (randomly selected).
Then, we fix the representation found by our method and we use a subset of the data (70\%) for the excluded task to train the last layer, maintaining fixed the representation layer.
Finally, we used the remaining data (30\%) of the novel task as test set, measuring both error and fairness measure. 

We repeated all the experiments both with and without the sensitive feature in the functional form of the model.
We validated the hyperparameters using a grid search with $\lambda \in \{10^{-6.0}, 10^{-5.8}, \cdots, 10^{+4.0} \}$ and $r \in \{ 2^j d \,\, | \,\, j = -4, -3, \dots, 10 \}$, following the validation procedure in~\cite{donini2018empirical}.
Specifically, in the first step, the classical 10-fold CV error for each of the combination of the hyperparameters is computed.
In the second step, we shortlist all the hyperparameters’ combinations with error close to the best one (in our case, above 90\% of the smallest error).
Finally, from this list, we select the hyperparameters with the smallest fairness risk.
Concerning the error (ERR) we used mean average precision error as the performance index, and concerning the fairness of our model (FAIR), we compute the the difference of demographic parity as $\frac{1}{|\mathcal{Y}|} \sum_{y \in \mathcal{Y}} |P(f(x) = y | s = 1) - P(f(x)=y | s = 2)|$, since in our datasets the output space is finite.
For all the experiments, we report performance over $30$ repetitions with the corresponding standard deviation.

\noindent {\bf Datasets.} In our comparisons we used three datasets.
The first one is the School data set~\cite{goldstein1991multilevel} -- made available by the Inner London Education Authority (ILEA) -- formed by examination records from 139 secondary schools in years 1985, 1986 and 1987.
It is a random 50\% sample with 15362 students.
Each task in this setting is to predict exam scores for students in one school, based on eight inputs.
The first four inputs (year of the exam, gender, VR band and ethnic group) are student-dependent, the next four (percentage of students eligible for free school meals, percentage of students in VR band one, school gender (mixed or single-gender and school denomination) are school-dependent.
The categorical variables (year, ethnic group and school denomination) were split up in one-hot variables, one for each category, making a new total of 16 student-dependent inputs, and six school-dependent inputs.
We scaled each covariate and output to have zero mean and unit variance.
The sensitive attribute is the gender of the student.
The second dataset we propose has been collected at the University of Genoa\footnote{The data and the research are related to the project DROP@UNIGE of the University of Genoa.} (UNIGE) and is also exploited in~\cite{oneto2019general}.
This dataset is a proprietary and highly sensitive dataset containing all the data about the past and present students enrolled at the UNIGE.
In this study we take into consideration students who enrolled, in the academic year (a.y.) 2017-2018.
The dataset contains $5000$ instances, each one described by $35$ attributes (both numeric and categorical) about ethnicity, gender, financial status, and previous school experience.
The scope is to predict the grades at the end of the first semester being fair with respect to the gender of the student.
The sensitive attribute is the gender of the student.
Finally, the third dataset is Movielens~\cite{harper2016movielens}.
Specifically, we considered Movielens 100k (ml100k), which consists of ratings (1 to 5) provided by 943 users for a set of 1682 movies, with a total of 100,000 ratings available.
Additional features for each movie, such as the year of release or its genre, are provided.
The sensitive attribute is the gender of the user.

\begin{table}
\caption{Comparison between  the following method: (a) Unconstrained single task learning (STL), (b) Fair constrained STL, (c) Unconstrained MTL, (d) Fair constrained MTL, that is the proposed method.}
\centering
\scriptsize
\setlength{\tabcolsep}{.1cm}
\renewcommand{\arraystretch}{1.4}
\begin{tabular}{clggcc|ggcc}
\toprule
& & \multicolumn{2}{g}{STL - UnCons} & \multicolumn{2}{c|}{STL - Cons} & \multicolumn{2}{g}{MTL - UnCons} & \multicolumn{2}{c}{MTL - Cons} \\
& Dataset & ERR & FAIR & ERR & FAIR & ERR & FAIR & ERR & FAIR \\ 
\midrule
\parbox[t]{2mm}{\multirow{8}{*}{\rotatebox[origin=c]{90}{{\bf Same Tasks}}}}
& \multicolumn{9}{c}{Sensitive feature not in the functional form of the model}\\
& School 
& $15.30 {\pm} 0.60$ & $0.110 {\pm} 0.005$ 
& $16.37 {\pm} 0.34$ & $0.044 {\pm} 0.003$ 
& $10.71 {\pm} 0.57$ & $0.077 {\pm} 0.003$
& $11.78 {\pm} 0.75$ & $0.011 {\pm} 0.000$ \\
 & UNIGE 
& $19.50 {\pm} 0.94$ & $0.100 {\pm} 0.006$
& $20.87 {\pm} 1.16$ & $0.040 {\pm} 0.002$
& $13.65 {\pm} 0.47$ & $0.070 {\pm} 0.003$
& $15.02 {\pm} 0.54$ & $0.010 {\pm} 0.001$ \\
 & Movielens 
& $30.30 {\pm} 1.98$ & $0.160 {\pm} 0.008$
& $32.42 {\pm} 1.14$ & $0.048 {\pm} 0.002$
& $15.15 {\pm} 0.60$ & $0.112 {\pm} 0.008$
& $17.27 {\pm} 0.76$ & $0.000 {\pm} 0.000$ \\ 
& \multicolumn{9}{c}{Sensitive feature in the functional form of the model}\\
& School
& $14.23 {\pm} 0.70$ & $0.118 {\pm} 0.006$
& $15.30 {\pm} 0.81$ & $0.052 {\pm} 0.003$
& $ 9.64 {\pm} 0.40$ & $0.085 {\pm} 0.004$
& $10.71 {\pm} 0.52$ & $0.019 {\pm} 0.001$ \\
& UNIGE
& $18.13 {\pm} 0.83$ & $0.107 {\pm} 0.005$
& $19.50 {\pm} 0.71$ & $0.047 {\pm} 0.003$
& $12.29 {\pm} 0.67$ & $0.077 {\pm} 0.004$
& $13.65 {\pm} 0.82$ & $0.017 {\pm} 0.001$ \\
& Movielens
& $28.18 {\pm} 1.35$ & $0.171 {\pm} 0.010$
& $30.30 {\pm} 1.28$ & $0.059 {\pm} 0.002$
& $13.03 {\pm} 0.47$ & $0.123 {\pm} 0.007$
& $15.15 {\pm} 0.73$ & $0.011 {\pm} 0.001$ \\ 
\midrule
\parbox[t]{2mm}{\multirow{8}{*}{\rotatebox[origin=c]{90}{{\bf New Tasks}}}}
& \multicolumn{9}{c}{Sensitive feature not in the functional form of the model}\\
& School
& $18.36 {\pm} 1.12$ & $0.121 {\pm} 0.007$
& $19.43 {\pm} 0.80$ & $0.055 {\pm} 0.003$
& $13.77 {\pm} 0.52$ & $0.088 {\pm} 0.003$
& $14.84 {\pm} 0.74$ & $0.022 {\pm} 0.001$ \\
& UNIGE
& $21.45 {\pm} 1.16$ & $0.105 {\pm} 0.006$
& $22.82 {\pm} 1.22$ & $0.045 {\pm} 0.002$
& $15.60 {\pm} 0.83$ & $0.075 {\pm} 0.003$
& $16.97 {\pm} 0.70$ & $0.015 {\pm} 0.001$ \\
& Movielens
& $33.33 {\pm} 2.14$ & $0.176 {\pm} 0.009$
& $35.45 {\pm} 1.84$ & $0.064 {\pm} 0.004$
& $18.18 {\pm} 0.76$ & $0.128 {\pm} 0.007$
& $20.30 {\pm} 1.18$ & $0.016 {\pm} 0.001$ \\ 
& \multicolumn{9}{c}{Sensitive feature in the functional form of the model}\\
& School
& $17.29 {\pm} 0.73$ & $0.129 {\pm} 0.007$
& $18.36 {\pm} 0.88$ & $0.063 {\pm} 0.004$
& $12.70 {\pm} 0.50$ & $0.096 {\pm} 0.005$
& $13.77 {\pm} 0.76$ & $0.030 {\pm} 0.002$ \\
& UNIGE
& $20.08 {\pm} 1.21$ & $0.112 {\pm} 0.005$
& $21.45 {\pm} 1.04$ & $0.052 {\pm} 0.002$
& $14.23 {\pm} 0.67$ & $0.082 {\pm} 0.001$
& $15.60 {\pm} 0.61$ & $0.022 {\pm} 0.001$ \\
& Movielens
& $31.21 {\pm} 1.63$ & $0.187 {\pm} 0.007$
& $33.33 {\pm} 1.28$ & $0.075 {\pm} 0.004$
& $16.06 {\pm} 0.92$ & $0.139 {\pm} 0.011$
& $18.18 {\pm} 0.79$ & $0.027 {\pm} 0.001$ \\ 
\bottomrule
\end{tabular}
\label{tab:statistics1new}
\end{table}
\begin{table}
\caption{Comparison of the following methods: (M1) Standard MTL with the fairness constraints on the outputs, (M2) FFNN with linear activation and the fair shared representation method presented in~\cite{madras2018learning}, (M3) FFNN with fair shared representation~\cite{edwards2015censoring}, (M4) Fair constrained MTL (Our Method).}
\centering
\scriptsize
\setlength{\tabcolsep}{.1cm}
\renewcommand{\arraystretch}{1.4}
\begin{tabular}{clggccggcc}
\toprule
& & \multicolumn{2}{g}{M1} & \multicolumn{2}{c}{M2} & \multicolumn{2}{g}{M3} & \multicolumn{2}{c}{M4 (OURS)} \\
& Dataset & ERR & FAIR & ERR & FAIR & ERR & FAIR & ERR & FAIR \\ 
\midrule
\parbox[t]{2mm}{\multirow{8}{*}{\rotatebox[origin=c]{90}{{\bf Same Tasks}}}}
& \multicolumn{9}{c}{Sensitive feature not in the functional form of the model}\\
& School
& $12.34 {\pm} 0.75$ & $0.013 {\pm} 0.001$
& $13.44 {\pm} 1.04$ & $0.017 {\pm} 0.002$
& $12.93 {\pm} 0.79$ & $0.018 {\pm} 0.002$
& $11.78 {\pm} 0.75$ & $0.011 {\pm} 0.000$ \\
& UNIGE
& $18.12 {\pm} 0.98$ & $0.012 {\pm} 0.001$
& $21.23 {\pm} 1.34$ & $0.021 {\pm} 0.004$
& $26.19 {\pm} 1.76$ & $0.027 {\pm} 0.004$
& $15.02 {\pm} 0.54$ & $0.010 {\pm} 0.001$ \\
& Movielens
& $17.12 {\pm} 0.65$ & $0.009 {\pm} 0.000$
& $19.21 {\pm} 0.87$ & $0.014 {\pm} 0.002$
& $18.01 {\pm} 0.76$ & $0.012 {\pm} 0.002$
& $17.27 {\pm} 0.76$ & $0.007 {\pm} 0.000$ \\
& \multicolumn{9}{c}{Sensitive feature in the functional form of the model}\\
& School
& $11.01 {\pm} 0.91$ & $0.020 {\pm} 0.001$
& $12.01 {\pm} 1.01$ & $0.022 {\pm} 0.002$
& $13.31 {\pm} 1.23$ & $0.025 {\pm} 0.002$
& $10.71 {\pm} 0.52$ & $0.019 {\pm} 0.001$ \\
& UNIGE
& $13.75 {\pm} 0.82$ & $0.017 {\pm} 0.001$
& $20.13 {\pm} 1.24$ & $0.029 {\pm} 0.005$
& $25.92 {\pm} 1.76$ & $0.032 {\pm} 0.006$
& $13.65 {\pm} 0.82$ & $0.017 {\pm} 0.001$ \\
& Movielens
& $15.65 {\pm} 0.73$ & $0.010 {\pm} 0.001$
& $18.97 {\pm} 0.67$ & $0.017 {\pm} 0.004$
& $17.11 {\pm} 0.78$ & $0.015 {\pm} 0.003$
& $15.15 {\pm} 0.73$ & $0.011 {\pm} 0.001$ \\
\midrule
\parbox[t]{2mm}{\multirow{8}{*}{\rotatebox[origin=c]{90}{{\bf New Tasks}}}}
& \multicolumn{9}{c}{Sensitive feature not in the functional form of the model}\\
& School
& $15.64 {\pm} 0.79$ & $0.032 {\pm} 0.002$
& $16.43 {\pm} 1.11$ & $0.044 {\pm} 0.004$
& $17.21 {\pm} 1.32$ & $0.041 {\pm} 0.004$
& $14.84 {\pm} 0.74$ & $0.022 {\pm} 0.001$ \\
& UNIGE
& $16.21 {\pm} 0.97$ & $0.021 {\pm} 0.002$
& $21.98 {\pm} 1.47$ & $0.029 {\pm} 0.004$
& $27.31 {\pm} 1.23$ & $0.033 {\pm} 0.005$
& $16.97 {\pm} 0.70$ & $0.015 {\pm} 0.001$ \\
& Movielens
& $19.20 {\pm} 1.35$ & $0.025 {\pm} 0.002$
& $21.21 {\pm} 1.35$ & $0.031 {\pm} 0.004$
& $20.12 {\pm} 1.43$ & $0.030 {\pm} 0.003$
& $20.30 {\pm} 1.18$ & $0.016 {\pm} 0.001$ \\
& \multicolumn{9}{c}{Sensitive feature in the functional form of the model}\\
& School
& $14.72 {\pm} 0.87$ & $0.038 {\pm} 0.002$
& $18.02 {\pm} 1.07$ & $0.042 {\pm} 0.003$
& $17.92 {\pm} 0.87$ & $0.056 {\pm} 0.003$
& $13.77 {\pm} 0.76$ & $0.030 {\pm} 0.002$ \\
& UNIGE
& $15.89 {\pm} 0.68$ & $0.029 {\pm} 0.002$
& $19.21 {\pm} 1.04$ & $0.035 {\pm} 0.005$
& $25.87 {\pm} 1.23$ & $0.038 {\pm} 0.006$
& $15.60 {\pm} 0.61$ & $0.022 {\pm} 0.001$ \\
& Movielens
& $19.98 {\pm} 0.74$ & $0.038 {\pm} 0.002$
& $20.12 {\pm} 1.12$ & $0.037 {\pm} 0.003$
& $19.93 {\pm} 1.53$ & $0.038 {\pm} 0.004$
& $18.18 {\pm} 0.79$ & $0.027 {\pm} 0.001$ \\
\bottomrule
\end{tabular}
\label{tab:statistics2new}
\end{table}
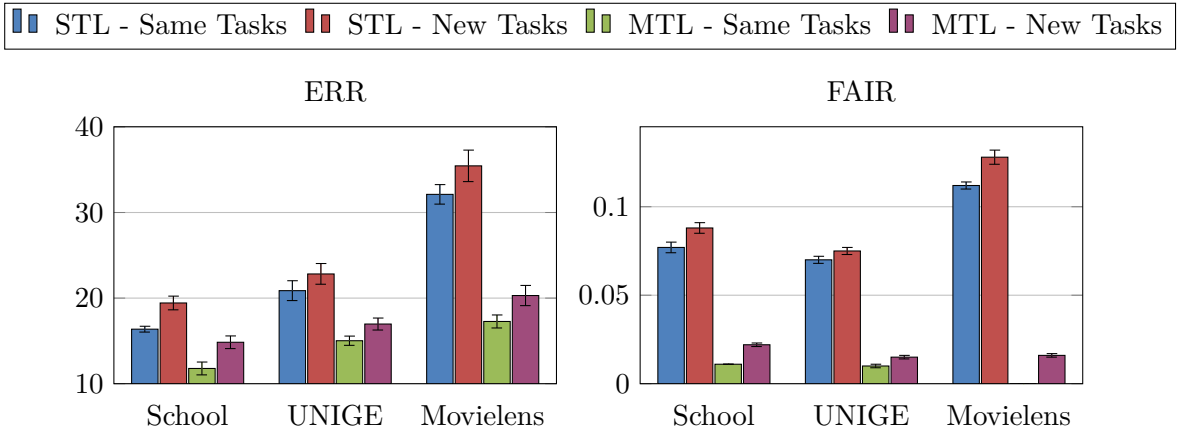
\begin{figure}[t]
    \centering
\begin{tikzpicture}
    \begin{axis}[
        width  = 0.45*\textwidth,
        height = 5cm,
        major x tick style = transparent,
        ybar=2*\pgflinewidth,
        bar width=10pt,
        ymajorgrids = true,
        title = {ERR},
        symbolic x coords={School,UNIGE,Movielens},
        xtick = data,
        scaled y ticks = false,
        enlarge x limits=0.25,
        ymin=10,
        legend cell align=left,
        legend columns=-1,
        legend style={
                at={(2.4,1.3)},
                anchor=south east,
                column sep=1ex
        }
    ]
    
    \addplot[style={fill=bblue,mark=none},error bars/.cd, y dir=both, y explicit] coordinates {(School,16.37)  += (0,0.34) -= (0,0.34)
    (UNIGE,20.87)  += (0,1.16) -= (0,1.16)
    (Movielens,32.12)  += (0,1.14) -= (0,1.14) }; 
    \addplot[style={fill=rred,mark=none},error bars/.cd, y dir=both, y explicit] coordinates 
    {(School,19.43)   += (0,0.8) -= (0,0.8)
    (UNIGE,22.82)  += (0,1.22) -= (0,1.2)
    (Movielens,35.45)  += (0,1.84) -= (0,1.84) }; 
    \addplot[style={fill=ggreen,mark=none},error bars/.cd, y dir=both, y explicit] coordinates {(School,11.78) += (0,0.75) -= (0,0.75)
    (UNIGE,15.02)  += (0,0.54) -= (0,0.54)
    (Movielens,17.27) += (0,0.76) -= (0,0.76)}; 
    \addplot[style={fill=ppurple,mark=none},error bars/.cd, y dir=both, y explicit] coordinates {(School,14.84)  += (0,0.74) -= (0,0.74)
    (UNIGE,16.97)  += (0,0.70) -= (0,0.70)
    (Movielens,20.3) += (0,1.18) -= (0,1.18)}; 

    \legend{STL - Same Tasks,STL - New Tasks,MTL - Same Tasks,MTL - New Tasks}
    \end{axis} \hspace{7cm}
        \begin{axis}[
        width  = 0.45*\textwidth,
        height = 5cm,
        major x tick style = transparent,
        ybar=2*\pgflinewidth,
        bar width=10pt,
        ymajorgrids = true,
        title = {FAIR},
        symbolic x coords={School,UNIGE,Movielens},
        xtick = data,
        scaled y ticks = false,
        enlarge x limits=0.25,
        ymin=0,
yticklabel style={
        /pgf/number format/fixed,
        /pgf/number format/precision=5
},
scaled y ticks=false,
        legend cell align=left,
        legend style={
                at={(1,1.05)},
                anchor=south east,
                column sep=1ex
        }
    ]
    
    \addplot[style={fill=bblue,mark=none},error bars/.cd, y dir=both, y explicit] coordinates {(School,0.077)  += (0,0.003) -= (0,0.003)
    (UNIGE,0.07)  += (0,0.002) -= (0,0.002)
    (Movielens,0.112)  += (0,0.002) -= (0,0.002)}; 
    \addplot[style={fill=rred,mark=none},error bars/.cd, y dir=both, y explicit] coordinates {(School,0.088)   += (0,0.003) -= (0,0.003)
    (UNIGE,0.075)   += (0,0.002) -= (0,0.002)
    (Movielens,0.128)  += (0,0.004) -= (0,0.004)}; 
    \addplot[style={fill=ggreen,mark=none},error bars/.cd, y dir=both, y explicit] coordinates {(School,0.011)   += (0,0.0) -= (0,0.0)
    (UNIGE,0.01)   += (0,0.001) -= (0,0.001)
    (Movielens,0.0)  += (0,0.0) -= (0,0.0)}; 
    \addplot[style={fill=ppurple,mark=none},error bars/.cd, y dir=both, y explicit] coordinates {(School,0.022)   += (0,0.001) -= (0,0.001)
    (UNIGE,0.015)   += (0,0.001) -= (0,0.001)
    (Movielens,0.016)  += (0,0.001) -= (0,0.001)}; 

    \end{axis}
\end{tikzpicture}
\caption{Graphical representation of the results in Table~\ref{tab:statistics1new}, when the sensitive feature is not included in the functional form of the model and the fairness constraint is active.}
\label{fig:stlmtlres}
\end{figure}
\begin{figure}[t]
    \centering
\hspace{-11cm}
\begin{tikzpicture}
    \begin{axis}[
        width  = 0.45*\textwidth,
        height = 5cm,
        major x tick style = transparent,
        ybar=2*\pgflinewidth,
        bar width=10pt,
        ymajorgrids = true,
        title = {ERR},
        symbolic x coords={School,UNIGE,Movielens},
        xtick = data,
        scaled y ticks = false,
        enlarge x limits=0.25,
        ymin=10,
        ymax=30,
        legend cell align=left,
        legend columns=-1,
        legend style={
                at={(2.3,1.3)},
                anchor=south east,
                column sep=1ex
        }
    ]
    
    \addplot[style={fill=turquoise,mark=none},error bars/.cd, y dir=both, y explicit] coordinates {(School,15.64)  += (0,0.79) -= (0,0.79)
    (UNIGE,16.21)  += (0,0.97) -= (0,0.97)
    (Movielens,19.20)  += (0,1.35) -= (0,1.35) }; 
    \addplot[style={fill=darkgray,mark=none},error bars/.cd, y dir=both, y explicit] coordinates 
    {(School,16.43)   += (0,1.11) -= (0,1.11)
    (UNIGE,21.98)  += (0,1.47) -= (0,1.47)
    (Movielens,21.21)  += (0,1.35) -= (0,1.35) }; 
    \addplot[style={fill=saddlebrown,mark=none},error bars/.cd, y dir=both, y explicit] coordinates {(School,17.21) += (0,1.32) -= (0,1.32)
    (UNIGE,27.31)  += (0,1.23) -= (0,1.23)
    (Movielens,20.12) += (0,1.43) -= (0,1.43)}; 
    \addplot[style={fill=ppurple,mark=none},error bars/.cd, y dir=both, y explicit] coordinates {(School,14.84)  += (0,0.74) -= (0,0.74)
    (UNIGE,16.97)  += (0,0.70) -= (0,0.70)
    (Movielens,20.3) += (0,1.18) -= (0,1.18)}; 


    \end{axis} \hspace{7cm}
        \begin{axis}[
        width  = 0.45*\textwidth,
        height = 5cm,
        major x tick style = transparent,
        ybar=2*\pgflinewidth,
        bar width=10pt,
        ymajorgrids = true,
        title = {FAIR},
        symbolic x coords={School,UNIGE,Movielens},
        xtick = data,
        scaled y ticks = false,
        enlarge x limits=0.25,
        ymin=0,
yticklabel style={
        /pgf/number format/fixed,
        /pgf/number format/precision=5
},
scaled y ticks=false,
        legend cell align=left,
        legend columns=-1,
        legend style={
                at={(0.45,1.3)},
                anchor=south east,
                column sep=1ex
        }
    ]
    
    \addplot[style={fill=turquoise,mark=none},error bars/.cd, y dir=both, y explicit] coordinates {(School,0.032)  += (0,0.002) -= (0,0.002)
    (UNIGE,0.021)  += (0,0.002) -= (0,0.002)
    (Movielens,0.025)  += (0,0.002) -= (0,0.002)}; 
    \addplot[style={fill=darkgray,mark=none},error bars/.cd, y dir=both, y explicit] coordinates {(School,0.044)   += (0,0.004) -= (0,0.004)
    (UNIGE,0.029)   += (0,0.004) -= (0,0.004)
    (Movielens,0.031)  += (0,0.004) -= (0,0.004)}; 
    \addplot[style={fill=saddlebrown,mark=none},error bars/.cd, y dir=both, y explicit] coordinates {(School,0.041)   += (0,0.004) -= (0,0.004)
    (UNIGE,0.033)   += (0,0.005) -= (0,0.005)
    (Movielens,0.030)  += (0,0.003) -= (0,0.003)}; 
    \addplot[style={fill=ppurple,mark=none},error bars/.cd, y dir=both, y explicit] coordinates {(School,0.022)   += (0,0.001) -= (0,0.001)
    (UNIGE,0.015)   += (0,0.001) -= (0,0.001)
    (Movielens,0.016)  += (0,0.001) -= (0,0.001)}; 

    \legend{M1, M2 ,M3, M4 (OURS)}
    \end{axis}
\end{tikzpicture}
\caption{Graphical representation of the results in Table~\ref{tab:statistics2new} for new tasks when the sensitive feature is not included in the functional form of the model.}
\label{fig:stlmtlres2}
\end{figure}
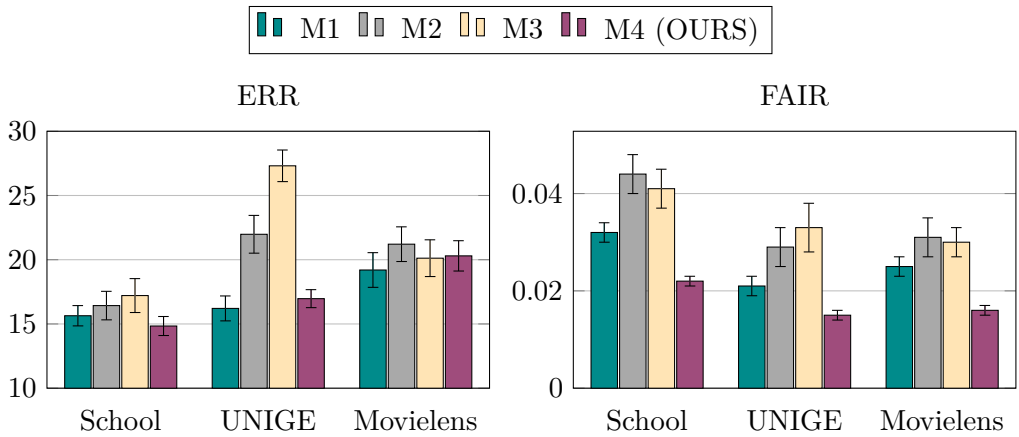
\noindent {\bf Discussion.} From our experimental results, different interesting aspects and comparisons can be extracted.
Firstly, the results in Table~\ref{tab:statistics1new} confirm the benefit of using a MTL approach in comparison to STL, in that accuracy has a significant improvement, both on same and novel tasks, thanks to the shared representation. Achieving less error has the positive side effect of producing a more fair model, even in the unconstrained case (i.e.~fair unaware). 

In the case of constrained methods, learning a fair shared representation slightly increases the final error but brings a large decrease of the fairness measure. From Table~\ref{tab:statistics1new}, we observe that this benefit is maintained also by tackling new and unseen (during the training of the shared representation) tasks.
In this sense, our method (constrained MTL) obtains the best performance among all the others.

In general, the same analysis of the results applies to both having and not having the sensitive feature in the functional form of the model.
In order to better interpret our results, and due to our higher interest in the case of a fair constrained model without the sensitive feature in the functional form of the model, we compared in Figure~\ref{fig:stlmtlres} the constrained STL approach to the constrained MTL approach (our method) both on the same and the novel tasks.
In this figure it is easier to note the benefits of our algorithm in decreasing both the error and the fairness measure.

Finally, we compared our method with three different state-of-the-art methods.
In Table~\ref{tab:statistics2new} and Figure~\ref{fig:stlmtlres2}, we show these results.
We note how our method, in all the possible settings, obtains better or comparable performance.
In fact, it is able to maintain a larger accuracy (comparable to the other methods) and simultaneously a smaller fairness risk.
\section{Conclusion}
\label{sec:6}
We have presented a method to learn a fair shared representation among different tasks in a MTL setting.
Our method is able to provide good generalization performance both in accuracy and fairness over novel and unseen tasks.
We studied the learning ability of our method in theory and we analyzed the performance over several experimental scenarios in practice.
The obtained results corroborate our theoretical findings and proved that our approach overcomes common benchmark algorithms and current state-of-the-art methods.
Our next step will be to study (explicit) fair representation learning in the context of shallow and deep neural networks, basically a generalization to the non-linear case of the proposed approach, with particular attention to the interpretability of the learned representation, in the context of transparency and trust of the machine learning model.
\bibliographystyle{plain}
\bibliography{biblio}
\newpage
\begin{appendices}
\section*{Appendix}

In this appendix, we first collect some tools used in our analysis. We then
present an improvement of Theorem 2 in [23], which is instrumental in the
proof of Theorem \ref{prop:1}, and explain how to pass to a fully data dependent
bound.

\section{Matrix concentration inequalities}

\begin{theorem}[Part (i) of Theorem 7 of \cite{Mau16}]
	\label{Theorem random operators} Let $%
	A_{1},\dots ,A_{N}$ be independent random operators on a Hilbert space
	satisfying $0\preceq A_{k}\preceq I$ and suppose that for some $d\in 
	\mathbb{N}
	$%
	\begin{equation}
	\dim {\rm Span}\left( {\rm Ran}\left( A_{1}\right) ,\dots ,{\rm Ran}\left( A_{N}\right)
	\right) \leq d  \label{Oliveira dimension condition}
	\end{equation}%
	almost surely. Then%
	\begin{equation}
	\Pr \left\{ \left\Vert \sum_{k}\left( A_{k}-\mathbb{E}A_{k}\right)
	\right\Vert _{\infty }>s\right\} \leq 4d^{2}\exp \left( \frac{-s^{2}}{%
		9\left\Vert \sum_{k}\mathbb{E}A_{k}\right\Vert _{\infty }+6s}\right) .
	\label{Matrix inequality from trace paper}
	\end{equation}%
\end{theorem}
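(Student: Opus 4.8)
The plan is to prove this as a matrix Bernstein inequality by the exponential-moment (Chernoff) method, the whole point being to replace the ambient Hilbert-space dimension by the span dimension $d$. Write $X=\sum_k(A_k-\mathbb{E}A_k)$ and use $\|X\|_\infty=\max(\lambda_{\max}(X),\lambda_{\max}(-X))$, so that a union bound reduces everything to a one-sided tail $\Pr\{\lambda_{\max}(X)>s\}$, the same estimate then applying to $-X$. The immediate obstacle is that, although each realization of $\sum_k A_k$ is supported on the random subspace $V=\mathrm{Span}(\mathrm{Ran}(A_1),\dots,\mathrm{Ran}(A_N))$ with $\dim V\le d$ (for a PSD operator, $\mathrm{Ran}(A_k)\subseteq V$ forces $A_k=P_V A_k P_V$), the centering term $M:=\sum_k\mathbb{E}A_k$ need not be low rank, so $X$ itself is \emph{not} confined to any low-dimensional space. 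I would remove $M$ by symmetrization: with independent ghost copies $A_k'$ and Rademacher signs $\epsilon_k$, the standard argument bounds the moment generating function of $X$ by that of $Y=\sum_k\epsilon_k(A_k-A_k')$, and crucially $Y$ is supported on $\mathrm{Span}(\mathrm{Ran}(A_k),\mathrm{Ran}(A_k'))$, a random subspace of dimension at most $2d$.

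The Bernstein exponent comes from $0\preceq A_k\preceq I$. This gives the uniform bound $\|A_k-A_k'\|_\infty\le 1$ and, conditional on all realizations $\{A_k,A_k'\}$, the bounded-Rademacher operator inequality $\mathbb{E}_\epsilon e^{\theta\epsilon_k B_k}\preceq \exp\big((\cosh\theta-1)\,B_k^2\big)$ with $B_k=A_k-A_k'$ (this Bennett-type, rather than Hoeffding-type, bound is what later produces the linear-in-$s$ term $6s$ in the denominator). Using $A_k^2\preceq A_k$, the variance proxy is controlled by $\mathbb{E}\sum_k B_k^2=2\sum_k\mathrm{Var}(A_k)\preceq 2\sum_k\mathbb{E}A_k^2\preceq 2M$, so the relevant variance parameter is $\|M\|_\infty$ (the source of the $9\|M\|_\infty$ term).

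The crux is the conditional Chernoff estimate $\Pr\{\lambda_{\max}(Y)>s\mid\{A_k,A_k'\}\}\le e^{-\theta s}\,\mathbb{E}_\epsilon\,\trace_W\, e^{\theta Y}$, where the trace may be taken over the fixed subspace $W$ on which $Y$ lives, since for $s>0$ the extremal eigenvalue is carried by $W$. Combining the per-term bounds through the Lieb/Tropp master inequality yields $\mathbb{E}_\epsilon\,\trace_W\,e^{\theta Y}\le (\dim W)\,\exp\big((\cosh\theta-1)\,\|{\textstyle\sum_k}B_k^2\|_\infty\big)$, and here the low-dimensional random support is what keeps the prefactor polynomial in $d$ rather than scaling with the ambient dimension. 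I expect this step to be the main difficulty: one must justify restricting the trace while $W$ is \emph{random}, check that the conditioning does not corrupt the variance proxy $M$, and — most delicately — carry the bookkeeping through the symmetrization and the two random copies so that the polynomial dimension factor comes out as the stated $4d^2$ rather than a naive linear-in-$d$ count; this quadratic dependence is exactly the point of Oliveira's refinement of the Ahlswede--Winter argument.

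Finally I would uncondition, replace $\|\sum_k B_k^2\|_\infty$ by its expectation bound $2\|M\|_\infty$, and optimize over $\theta>0$ in the Bennett function $\cosh\theta-1$ exactly as in the scalar Bernstein inequality. Balancing the sub-Gaussian regime (small $s$, governed by $\|M\|_\infty$) against the sub-exponential regime (large $s$, governed by the boundedness constant $1$) collapses the two-parameter bound into the single denominator $9\|M\|_\infty+6s$, giving $\Pr\{\|X\|_\infty>s\}\le 4d^2\exp\!\big(-s^2/(9\|M\|_\infty+6s)\big)$ as claimed.
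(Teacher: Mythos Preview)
The paper does not prove this theorem; it is quoted from \cite{Mau16} as a black-box concentration tool, and the only thing proved in the paper is the downstream Corollary~\ref{Corollary random operators}. There is therefore no ``paper's own proof'' to compare your attempt against.

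That said, your sketch is a sensible high-level plan for how such a result is established: symmetrize to confine the fluctuating part to a random subspace of dimension at most $2d$ (thereby sidestepping the fact that $\sum_k\mathbb{E}A_k$ need not be low rank), use the Bennett-type per-summand bound $\mathbb{E}_\epsilon e^{\theta\epsilon_k B_k}\preceq\exp((\cosh\theta-1)B_k^2)$ together with $A_k^2\preceq A_k$ to get the variance proxy $\|M\|_\infty$, apply the Lieb/Tropp trace inequality on the low-dimensional support, and optimize in $\theta$ to produce the denominator $9\|M\|_\infty+6s$. You also correctly flag the genuinely delicate step: justifying the restricted trace when the supporting subspace is itself random, and carrying the dimension constant through the symmetrization.

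What your proposal does \emph{not} do is close that step. You assert the outcome $4d^2$ but do not derive it, and the claim that the quadratic dependence ``is exactly the point of Oliveira's refinement of the Ahlswede--Winter argument'' is at best imprecise: Oliveira's and Tropp's refinements of Ahlswede--Winter are about replacing the sum of individual variances by the norm of the summed variance, and typically yield a \emph{linear} dimension prefactor; a standard matrix-Bernstein run on your $2d$-dimensional symmetrized space would naively give $O(d)$, not $O(d^2)$. The specific $4d^2$ in the stated bound comes from the particular argument in \cite{Mau16}, and to complete your proof you would have to reproduce that argument rather than invoke it by name. As written, the proposal is an outline with the crux left open.
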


\begin{corollary}
	\label{Corollary random operators}Under the conditions of the Theorem we have (i) with probability at least $%
	1-\delta $ that%
	\begin{equation}
	\sqrt{\left\Vert \mathbb{E}\sum A_{k}\right\Vert _{\infty }}\leq \sqrt{%
		\left\Vert \sum A_{k}\right\Vert _{\infty }}+6\sqrt{\ln \left( \frac{4d^{2}}{%
			\delta }\right) },  \label{First corollary inequality}
	\end{equation}%
	and (ii) with the same bound on the probability%
	\begin{equation}
	\left\Vert \sum_{k}\left( A_{k}-\mathbb{E}A_{k}\right) \right\Vert _{\infty
	}\leq 3\sqrt{\left\Vert \sum A_{k}\right\Vert _{\infty }\ln \frac{8d^{2}}{%
		\delta }}+24\ln \frac{8d^{2}}{\delta }  \label{Second corollary inequality}
\end{equation}
\end{corollary}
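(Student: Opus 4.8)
The plan is to invert the Bernstein-type tail bound of \autoref{Theorem random operators} into a high-probability deviation estimate, and then to feed the two parts of the corollary into one another. Throughout I would write $S=\sum_k A_k$ and $V=\|\mathbb{E}\sum_k A_k\|_\infty=\|\mathbb{E}S\|_\infty$, so that the theorem reads $\Pr\{\|S-\mathbb{E}S\|_\infty>s\}\le 4d^2\exp\!\big(-s^2/(9V+6s)\big)$. The entire argument is deterministic algebra once this single probabilistic input is in place.

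First I would establish the deviation estimate underlying both claims. Setting the right-hand side equal to $\delta$ and writing $L=\ln(4d^2/\delta)$, the event $\|S-\mathbb{E}S\|_\infty\le s$ holds with probability at least $1-\delta$ as soon as $s^2-6Ls-9VL\ge 0$. Taking the positive root of this quadratic and using $\sqrt{x+y}\le\sqrt x+\sqrt y$ yields
\[
\|S-\mathbb{E}S\|_\infty \le 6L+3\sqrt{VL}
\]
with probability at least $1-\delta$.

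For part (i) I would combine this with the reverse triangle inequality $V=\|\mathbb{E}S\|_\infty\le\|S\|_\infty+\|S-\mathbb{E}S\|_\infty$ to get $V\le\|S\|_\infty+6L+3\sqrt{VL}$. Reading this as a quadratic inequality in $u=\sqrt V$, namely $u^2-3\sqrt L\,u-(\|S\|_\infty+6L)\le 0$, and bounding its positive root once more via $\sqrt{x+y}\le\sqrt x+\sqrt y$, one obtains $\sqrt V\le\sqrt{\|S\|_\infty}+\tfrac{3+\sqrt{33}}{2}\sqrt L$; since $\tfrac{3+\sqrt{33}}{2}<6$ this gives exactly $\sqrt{\|\mathbb{E}S\|_\infty}\le\sqrt{\|S\|_\infty}+6\sqrt{\ln(4d^2/\delta)}$, which is \eqref{First corollary inequality}.

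For part (ii) I would run both previous steps at confidence level $\delta/2$, replacing $L$ by $L'=\ln(8d^2/\delta)$, which is what turns the $4d^2$ into $8d^2$. A union bound makes the deviation estimate and the estimate of part (i) hold simultaneously with probability at least $1-\delta$. Substituting $\sqrt V\le\sqrt{\|S\|_\infty}+6\sqrt{L'}$ into $\|S-\mathbb{E}S\|_\infty\le 6L'+3\sqrt{VL'}$ gives $3\sqrt{VL'}\le 3\sqrt{\|S\|_\infty L'}+18L'$, whence $\|S-\mathbb{E}S\|_\infty\le 3\sqrt{\|S\|_\infty L'}+24L'$, which is \eqref{Second corollary inequality}. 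The only delicate point is the bookkeeping of constants through the two nested quadratic inversions together with propagating the $\delta/2$ split consistently into both invoked events; there is no probabilistic content beyond \autoref{Theorem random operators} itself.
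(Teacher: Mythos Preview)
Your proof is correct and follows essentially the same route as the paper: invert the Bernstein tail of \autoref{Theorem random operators} to obtain $\|S-\mathbb{E}S\|_\infty\le 6L+3\sqrt{VL}$, feed this into the triangle inequality and solve the resulting quadratic in $\sqrt V$ for part~(i), then union-bound the two events at level $\delta/2$ and substitute for part~(ii). The only cosmetic difference is that the paper completes the square where you invoke the quadratic formula, arriving at the same constant $(3+\sqrt{33})/2<6$.
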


\begin{proof}
	Equating the RHS of (\ref{Matrix inequality from trace paper}) to $\delta $
	we obtain with probability at least $1-\delta $ that%
	\begin{equation}
	\left\Vert \sum_{k}\left( A_{k}-\mathbb{E}A_{k}\right) \right\Vert _{\infty
	}\leq \sqrt{9\left\Vert \sum_{k}\mathbb{E}A_{k}\right\Vert _{\infty }\ln \frac{4d^{2}}{\delta}}+6\ln \frac{ 4d^{2}}{\delta} 
\label{Third corollary inequality}
\end{equation}%
Denoting for brevity $a:=\left\Vert \sum_{k}\mathbb{E}A_{k}\right\Vert
_{\infty }$, $b:=\left\Vert \sum_{k}A_{k}\right\Vert _{\infty }$ and $%
c=4d^{2}$ we have with probability at least $1-\delta $%
\[
a-b\leq 2\sqrt{a}\sqrt{ \frac{9}{4} \ln \frac{c}{\delta} }+6\ln
\frac{c}{\delta} ,
\]%
or, subtracting $2\sqrt{a}\sqrt{\frac{9}{4} \ln \frac{c}{\delta}}$ and adding $b+\frac{9}{4} \ln \frac{c}{\delta} $,%
\begin{eqnarray*}
	\left( \sqrt{a}-\sqrt{\frac{9}{4} \ln \frac{c}{\delta} }\right)
	^{2} &=&a-2\sqrt{a}\sqrt{\frac{9}{4} \ln \frac{c}{\delta} }%
	+\frac{9}{4} \ln \frac{c}{\delta}  \\
	&\leq &b+\frac{9}{4} \ln \frac{c}{\delta} +6\ln \left( c/\delta
	\right) .
\end{eqnarray*}%
Taking the positive squareroot and adding $\sqrt{\frac{9}{4} \ln
	\frac{c}{\delta} }$ gives%
\begin{eqnarray*}
	\sqrt{a} &\leq &\sqrt{\frac{9}{4} \ln \frac{c}{\delta} }+\sqrt{%
		b+\frac{9}{4} \ln \frac{c}{\delta} +6\ln \frac{c}{\delta} } \\
	&\leq &\sqrt{b}+\sqrt{9\ln \frac{c}{\delta} }+\sqrt{6\ln \frac{c}{\delta} }\leq \sqrt{b}+6\sqrt{\ln \frac{c}{\delta} },
\end{eqnarray*}%
which is (\ref{First corollary inequality}). Part (ii) then follows from a
union bound of (\ref{Third corollary inequality}) with (\ref{First corollary
	inequality}).
\end{proof}

The second matrix concentration inequality we need is 

\begin{theorem}[Theorem 1.5 in \cite{Tropp12}]	\label{Theorem Oliveira Tropp} Let $%
	A_{1},\dots ,A_{N}$ be fixed matrices with dimension $d_{1}\times d_{2}$ and 
	$\gamma _{1},...,\gamma _{N}$ independent standard normal variables. Let $%
	\sigma ^{2}$ be the variance patrameter%
	\[
	\sigma ^{2}=\max \left\{ \left\Vert \sum_{k}A_{k}\right\Vert _{\infty
	},\left\Vert \sum_{k}A_{k}^{\ast }\right\Vert _{\infty }\right\} .
	\]%
	Then for $s>0$%
	\[
	\Pr \left\{ \left\Vert \sum_{k}\gamma _{k}A_{k}\right\Vert _{\infty
	}>s\right\} \leq \left( d_{1}+d_{2}\right) e^{-s^{2}/\left( 2\sigma
	^{2}\right) }.
\]%
\end{theorem}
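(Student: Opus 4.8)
The plan is to prove the tail bound by the matrix Laplace transform method, after reducing the rectangular case to a Hermitian Gaussian series via the Hermitian dilation. First I would set $\mathcal{H}(A) = \begin{bmatrix} 0 & A \\ A^{\ast} & 0 \end{bmatrix}$, a Hermitian matrix of size $d_1+d_2$ whose eigenvalues are the numbers $\pm\sigma_i(A)$, so that $\lambda_{\max}(\mathcal{H}(M)) = \|M\|_\infty$, and whose square is block diagonal, $\mathcal{H}(A)^2 = \diag(AA^{\ast}, A^{\ast}A)$. Writing $H_k = \mathcal{H}(A_k)$ and $Y = \sum_k \gamma_k H_k$, linearity of $\mathcal{H}$ gives $Y = \mathcal{H}(\sum_k \gamma_k A_k)$, so the event $\{\|\sum_k \gamma_k A_k\|_\infty > s\}$ is exactly $\{\lambda_{\max}(Y) > s\}$ and it suffices to bound the top eigenvalue of the Hermitian series $Y$.

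Next I would apply the master tail bound $\Pr\{\lambda_{\max}(Y) > s\} \le \inf_{\theta>0} e^{-\theta s}\, \mathbb{E}\,\tr e^{\theta Y}$, which follows from Markov's inequality together with $\tr e^{\theta Y} \ge \lambda_{\max}(e^{\theta Y}) = e^{\theta\lambda_{\max}(Y)}$. The heart of the argument is the matrix moment generating function $\mathbb{E}\,\tr e^{\theta Y}$. For a single standard normal $\gamma_k$ and fixed Hermitian $H_k$, diagonalizing $H_k$ and using the scalar Gaussian moment generating function yields the exact identity $\mathbb{E}_{\gamma_k} e^{\theta\gamma_k H_k} = e^{\theta^2 H_k^2/2}$. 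To combine the independent factors inside the trace exponential I would invoke Lieb's concavity theorem, in the form of subadditivity of matrix cumulant generating functions, namely $\mathbb{E}\,\tr\exp(\sum_k X_k) \le \tr\exp(\sum_k \log \mathbb{E}\, e^{X_k})$ for independent Hermitian $X_k$.

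Taking $X_k = \theta\gamma_k H_k$ and substituting $\log \mathbb{E}\, e^{\theta\gamma_k H_k} = \tfrac{\theta^2}{2} H_k^2$ gives $\mathbb{E}\,\tr e^{\theta Y} \le \tr\exp(\tfrac{\theta^2}{2}\sum_k H_k^2)$. I would then bound the trace of a positive matrix by its dimension times its largest eigenvalue, so that
\[
\mathbb{E}\,\tr e^{\theta Y} \le (d_1+d_2)\exp\Big(\tfrac{\theta^2}{2}\,\big\|\textstyle\sum_k H_k^2\big\|_\infty\Big).
\]
Because $\sum_k H_k^2 = \diag(\sum_k A_k A_k^{\ast}, \sum_k A_k^{\ast} A_k)$ is block diagonal, its operator norm equals $\max\{\|\sum_k A_k A_k^{\ast}\|_\infty, \|\sum_k A_k^{\ast}A_k\|_\infty\} = \sigma^2$. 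Plugging this into the master bound and minimizing $e^{-\theta s + \theta^2\sigma^2/2}$ at $\theta = s/\sigma^2$ delivers the claimed $(d_1+d_2)e^{-s^2/(2\sigma^2)}$.

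The main obstacle is Lieb's concavity theorem: it is the one genuinely deep input, and it is precisely what lets the cumulant generating functions be added across more than two independent summands (a Golden--Thompson bound alone would not suffice here). Everything else — the dilation identities, the one-dimensional Gaussian moment generating function, the trace-versus-eigenvalue estimate, and the scalar optimization in $\theta$ — is routine. I would therefore either cite Lieb's theorem directly or, for a self-contained account, first establish the concavity of $X \mapsto \tr\exp(L + \log X)$ on positive definite $X$ and derive the subadditivity statement from it, before assembling the pieces above.
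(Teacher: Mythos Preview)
The paper does not supply its own proof of this statement: it is quoted verbatim as Theorem~1.5 of \cite{Tropp12} and used as a black box (specifically, through Corollary~\ref{Corollary Oliveira Tropp}). There is therefore no ``paper's proof'' to compare against. Your proposal is correct and is in fact an accurate outline of Tropp's original argument: Hermitian dilation to reduce to a symmetric Gaussian series, the matrix Laplace transform bound, the exact Gaussian mgf $\mathbb{E}\,e^{\theta\gamma H}=e^{\theta^2 H^2/2}$, Lieb's concavity to pass to $\tr\exp\!\big(\sum_k \log\mathbb{E}\,e^{X_k}\big)$, and finally optimizing in $\theta$.

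One minor point worth flagging: the variance parameter as printed in the paper's statement is a typo. The correct definition, which you use in your argument, is
\[
\sigma^{2}=\max\Big\{\Big\Vert \sum_{k}A_{k}A_{k}^{\ast}\Big\Vert_{\infty},\ \Big\Vert \sum_{k}A_{k}^{\ast}A_{k}\Big\Vert_{\infty}\Big\},
\]
i.e.\ the operator norm of the block-diagonal matrix $\sum_k H_k^2$, not $\max\{\|\sum_k A_k\|_\infty,\|\sum_k A_k^{\ast}\|_\infty\}$. With the statement as literally written the bound would be false in general; your proof recovers the correct quantity.
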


\begin{corollary}
	\label{Corollary Oliveira Tropp}Under above assumptions, if $d_{1}+d_{2}\geq
	3$ then%
	\[
	\mathbb{E}\left\Vert \sum_{k}\gamma _{k}A_{k}\right\Vert _{\infty }\leq 
	\frac{5}{2}\sqrt{\sigma ^{2}\ln \left( d_{1}+d_{2}\right) }.
	\]
\end{corollary}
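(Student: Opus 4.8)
The plan is to integrate the exponential tail bound of Theorem~\ref{Theorem Oliveira Tropp}. Write $Z=\left\Vert\sum_k\gamma_k A_k\right\Vert_\infty$ and $D=d_1+d_2\ge 3$. The case $\sigma^2=0$ is trivial (then $Z=0$ almost surely), so assume $\sigma^2>0$. Since $Z\ge 0$, the layer-cake identity gives $\mathbb{E}Z=\int_0^\infty\Pr\{Z>s\}\,ds$, and Theorem~\ref{Theorem Oliveira Tropp} yields $\Pr\{Z>s\}\le\min\bigl(1,\,D\,e^{-s^2/(2\sigma^2)}\bigr)$. The two bounds agree precisely at $u:=\sqrt{2\sigma^2\ln D}$, which is well defined because $\ln D>0$; this is the natural place to split the integral.

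First I would bound $\Pr\{Z>s\}$ by $1$ on $[0,u]$ and by $D\,e^{-s^2/(2\sigma^2)}$ on $[u,\infty)$, obtaining
\[
\mathbb{E}Z\le u+D\int_u^\infty e^{-s^2/(2\sigma^2)}\,ds .
\]
For the remaining Gaussian tail I would use the elementary estimate $\int_u^\infty e^{-s^2/(2\sigma^2)}\,ds\le\int_u^\infty\frac{s}{u}\,e^{-s^2/(2\sigma^2)}\,ds=\frac{\sigma^2}{u}\,e^{-u^2/(2\sigma^2)}$, valid for $u>0$. Since $D\,e^{-u^2/(2\sigma^2)}=1$ by the choice of $u$, the second term collapses to $\sigma^2/u$, so
\[
\mathbb{E}Z\le u+\frac{\sigma^2}{u}=\sqrt{2\sigma^2\ln D}+\sqrt{\frac{\sigma^2}{2\ln D}}=\sqrt{\sigma^2\ln D}\left(\sqrt{2}+\frac{1}{\sqrt{2}\,\ln D}\right).
\]

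Finally I would invoke the hypothesis $D\ge 3$, hence $\ln D\ge\ln 3$, to bound the parenthesised constant by $\sqrt{2}+1/(\sqrt{2}\,\ln 3)<2.06<\frac{5}{2}$, which gives $\mathbb{E}Z\le\frac{5}{2}\sqrt{\sigma^2\ln(d_1+d_2)}$ as claimed. There is no genuine obstacle here: the argument is a routine tail integration. The only points needing a little care are the degenerate case $\sigma^2=0$, the fact that the hypothesis $d_1+d_2\ge 3$ is exactly what allows the correction term $\sigma^2/u$ to be absorbed into the constant $\frac{5}{2}$, and the (optional) observation that splitting at the crossover point $u$ is what produces the clean closed form — any threshold works but yields a larger constant.
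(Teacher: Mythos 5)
Your proof is correct and follows essentially the same route as the paper: split the tail integral of the Oliveira--Tropp bound at the crossover point $\sqrt{2\sigma^{2}\ln(d_{1}+d_{2})}$, estimate the Gaussian tail by $\frac{\sigma^{2}}{u}e^{-u^{2}/(2\sigma^{2})}$, and absorb the resulting correction term into the constant $\frac{5}{2}$ using $d_{1}+d_{2}\geq 3$. Your write-up is in fact slightly more careful than the paper's (explicit treatment of $\sigma^{2}=0$ and of where the hypothesis $d_{1}+d_{2}\geq 3$ is used), but the argument is the same.
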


\begin{proof}
	Let $\delta =\sqrt{2\sigma ^{2}\ln \left( d_{1}+d_{2}\right) }$. Integration
	by parts gives%
	\begin{eqnarray*}
		\mathbb{E}\left\Vert \sum_{k}\gamma _{k}A_{k}\right\Vert _{\infty } &\leq
		&\delta +\left( d_{1}+d_{2}\right) \int_{\delta }^{\infty }e^{-t^{2}/2\sigma
			^{2}}dt \\
		&\leq &\delta +\left( d_{1}+d_{2}\right) \frac{\sigma ^{2}}{\delta }\exp
		\left( \frac{-\delta ^{2}}{2\sigma ^{2}}\right)  \\
		&\leq &\frac{5}{2}\sqrt{\sigma ^{2}\ln \left( d_{1}+d_{2}\right) }.
	\end{eqnarray*}%
\end{proof}

\section{Improved bound for the transfer risk}

We give an improvement of Theorem~2 in~\cite{Mau9} for the case of trace-norm regularization.

Most of the notation, definitions and assumptions are taken from~\cite{Mau9}, except that here we denote by $T$ the number of tasks and let $t \in \{1,\dots,T\}$ be the task index; these correspond to $n$ and $l\in \{1,\dots,n\}$ in \cite{Mau9}. We used this notation because it is common in the multitask literature. Our results are dimension free, so they hold in the general case that the input space is Hilbert space $\HH$ and $D$ a bounded linear operator on $\HH$. However to simplify the presentation here we take $\HH=\R^d$ and $D$ a $d \times d$ PSD matrix. We also use $R_\mu(w)$ and $R_{\z}(w)$ as the expected and empirical error of a weight vector $w$, that is
$$
\E_\mu(w) = \mathbb{E}_{(x,y) \sim \mu} [\ell(\la w,x\ra,y)],~~~~~~~~\E_\z(w) =  \frac{1}{m} \sum_{i=1}^m \ell(\la w,x_i\ra, y_i)
$$

For every PSD matrix $D$ we define the following quantities (see \cite[Secs.~2.2 \& 2.3]{Mau9})
$$
w(\x,\y) = \argmin_{w \in \mathbb{R}^d} \frac{1}{m} \sum_{i=1}^m \ell(\la w,x_i\ra, y_i) + \|w\|^2,~~~~~~~~w_D(\x,\y) = D^{\frac{1}{2}} w (D^{\frac{1}{2}} \x,\y).
$$
Note that the vector $w_D(\x,\y)$ corresponds to the minimizer of ridge regression with modified regularizer,
$$
w_D(\x,\y) = \argmin_{w \in {\rm Ran}(D)} \frac{1}{m} \sum_{i=1}^m \ell(\la w,x_i\ra, y_i) + w\trans D^+ w
$$
where $D^+$ is the pseudo-inverse of $D$.
%
\begin{theorem}
\label{Theoremprincipal}
Let ${\cal D}$ a subset of $d \times d$ PSD matrices. Suppose the algorithm $w$ is 1-bounded and has
kernel stability $L~\ $relative to the loss function $\ell $ and that for every $K<\infty $ there exists $M\left( K\right) $ such that for all $y \in \left[ 0,1\right] $ and for all $s,t\in \left[ -K,K\right] $ we have
\begin{align}
\ell \left( s,y\right) -\ell \left( t,y\right) \leq M\left( K\right)
\left\vert s-t\right\vert .
\end{align}
Then for every $\delta >0,$ with probability greater $1-\delta $ in the data $ \left( \mathbf{X},\mathbf{Y}\right) \sim \hat{\rho}^{T}$ we have for all $D\in \mathcal{D}$
\begin{align*}
\Exp_{\mu \sim \rho} \Exp_{\z \sim \mu^m} \E_\mu\left( w_{D}(\z)\right) & \leq \frac{1}{T}\sum_{t=1}^{T} \E_{\z_t}(w_{D}(\z_t))
+ 2M\left( \left\Vert D\right\Vert _{\infty }^{1/2}\right) \sqrt{\frac{%
\left\Vert D\right\Vert _{1}\left\Vert C\right\Vert _{\infty }}{m}}~~~~~~~~\\
& 
~~~~~~~~+7L   \max_{D\in {\cal D}} \{\trace D \}\sqrt{\frac{\ln \left( 2mT\right)
\Vert \hat{C}\Vert _{\infty }}{T}}+\sqrt{\frac{\ln \left(
1/\delta \right) }{2T}},
\end{align*}
where and $C$ and $\hat{C}$ are respectively the true and empirical total
covariance operator for the data.
\end{theorem}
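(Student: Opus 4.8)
The plan is to use the standard two-level split for learning-to-learn guarantees. Write $\mathcal{E}_\rho(D)$ for the transfer risk $\Exp_{\mu\sim\rho}\Exp_{\z\sim\mu^m}\E_\mu(w_D(\z))$ and $\hat{\mathcal{E}}(D)$ for the empirical multi-task error $\frac{1}{T}\sum_{t=1}^{T}\E_{\z_t}(w_D(\z_t))$, and decompose
\begin{equation*}
\mathcal{E}_\rho(D) - \hat{\mathcal{E}}(D) = \underbrace{\Exp_{\mu,\z}\!\big[\E_\mu(w_D(\z)) - \E_\z(w_D(\z))\big]}_{I(D)} + \underbrace{\Big(\Exp_{\mu,\z}\E_\z(w_D(\z)) - \hat{\mathcal{E}}(D)\Big)}_{II(D)} .
\end{equation*}
The term $I(D)$ is the average within-task generalization gap, a deterministic functional of $\rho$ and $D$; the term $II(D)=\Exp[\hat{\mathcal{E}}(D)]-\hat{\mathcal{E}}(D)$ is the only random piece and the only one that must be controlled uniformly in $D\in\mathcal{D}$.

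\textbf{Within-task term.} For $I(D)$ I would use that $w_D(\z)$ minimizes the $D^{+}$-regularized empirical risk, so that objective at $w_D(\z)$ does not exceed its value at $w=0$, which is $\E_\z(0)\le1$ by $1$-boundedness; since $\E_\z(w_D(\z))\ge0$ this forces $w_D(\z)^{\top}D^{+}w_D(\z)\le1$, i.e.\ $w_D(\z)$ always lies in the fixed ellipsoid $\mathcal{F}_D=\{D^{1/2}v:\|v\|\le1\}$. Then $\E_\mu(w_D(\z))-\E_\z(w_D(\z))\le\sup_{w\in\mathcal{F}_D}(\E_\mu(w)-\E_\z(w))$, whose expectation is at most $2\,\Exp_\z\,\mathrm{Rad}_\z(\ell\circ\mathcal{F}_D)$ by symmetrization. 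Since every $w\in\mathcal{F}_D$ has $|\la w,x\ra|\le\|D\|_\infty^{1/2}$ on the unit sphere, the contraction lemma removes the loss at cost $M(\|D\|_\infty^{1/2})$, and Jensen gives $\mathrm{Rad}_\z(\{x\mapsto\la D^{1/2}v,x\ra:\|v\|\le1\})\le\sqrt{\trace(D\hat C_\z)/m}$ with $\hat C_\z$ the empirical second-moment matrix of $\z$. Averaging over $(\mu,\z)$, a further use of Jensen together with $\trace(DC)\le\|D\|_1\|C\|_\infty$ yields
\begin{equation*}
I(D)\le 2M\!\big(\|D\|_\infty^{1/2}\big)\sqrt{\tfrac{\|D\|_1\|C\|_\infty}{m}}\qquad\text{for all }D\in\mathcal{D},
\end{equation*}
which is the second term of the bound; since this inequality is between deterministic quantities it needs no union bound.

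\textbf{Meta term.} For $\sup_{D}II(D)$ I would first apply bounded differences: all loss values lie in $[0,1]$, so replacing one pair $(\mu_t,\z_t)$ moves $\sup_D II(D)$ by at most $1/T$, and McDiarmid gives, with probability $\ge1-\delta$, $\sup_D II(D)\le\Exp[\sup_D II(D)]+\sqrt{\ln(1/\delta)/(2T)}$, which is the last term. Next, $\Exp[\sup_D II(D)]$ is, by symmetrization, at most twice the Rademacher complexity of $\{\z\mapsto\E_\z(w_D(\z)):D\in\mathcal{D}\}$, and passing to Gaussian variables costs only a constant. Here kernel stability is the key: $D\mapsto\E_{\z_t}(w_D(\z_t))$ is $L$-Lipschitz in the modified Gram matrix $X_tDX_t^{\top}$, so a contraction/Gaussian-comparison step reduces the Gaussian complexity to $\tfrac{L}{T}\max_{D\in\mathcal{D}}\{\trace D\}$ times the expected spectral norm of a Gaussian matrix series formed from the data (using $\la D,\cdot\,\ra\le\trace D\,\|\cdot\|_\infty$ for PSD $D$). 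Applying Corollary~\ref{Corollary Oliveira Tropp} to that series, its variance parameter collapses to a multiple of $\|\hat C\|_\infty$ because the inputs lie on the unit sphere, and the dimensional factor is $\ln(2mT)$; this gives $7L\max_{D\in\mathcal{D}}\{\trace D\}\sqrt{\ln(2mT)\|\hat C\|_\infty/T}$ (equivalently, the population $\|C\|_\infty$ appears first and is traded for $\|\hat C\|_\infty$ via Corollary~\ref{Corollary random operators}(i)). Summing the three contributions and using that only the McDiarmid step was probabilistic completes the argument.

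\textbf{Main obstacle.} The delicate part will be the last step: arranging the Gaussian-process comparison so that the nonlinear, kernel-stable map $D\mapsto\E_{\z_t}(w_D(\z_t))$ is genuinely dominated by a matrix Gaussian series, and controlling its variance parameter so that it becomes $T\|\hat C\|_\infty$ rather than something of order $T$. This refinement is exactly what improves Theorem~2 of \cite{Mau9}, where a crude $\sqrt{1/T}$ rate replaces $\sqrt{\|\hat C\|_\infty\ln(mT)/T}$.
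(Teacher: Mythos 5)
Your proposal follows essentially the same route as the paper: the paper's proof is precisely the argument of Theorem~2 in~\cite{Mau9} (the $I(D)$/$II(D)$ decomposition, the within-task Rademacher/contraction bound giving the $2M(\|D\|_\infty^{1/2})\sqrt{\|D\|_1\|C\|_\infty/m}$ term, and McDiarmid plus Gaussian complexity for the meta term), with the single new ingredient being exactly the step you flag as the main obstacle --- bounding the Gaussian matrix series $\sum_{t,i,j}\gamma_{ij}^{t}J_{ij}^{t}$ via the Oliveira--Tropp inequality with variance parameter $m^{2}T\|\hat C\|_\infty$ and dimension $mT$, which yields the $\sqrt{\ln(2mT)\|\hat C\|_\infty/T}$ rate in place of the cruder $\sqrt{1/T}$. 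Your sketch is correct and matches the paper's appendix argument.
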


The proof uses the following theorem to bound the Gaussian complexity in the
exactly the same way as Theorem 7 is used to prove Theorem 2 in \cite{Mau9}%
.

\begin{theorem}
	Suppose $f:\left( H\times \left[ 0,1\right] \right) ^{m}\rightarrow \left[
	0,1\right] $ satisfies the Lipschitz condition%
	\[
	f\left( \mathbf{x},\mathbf{y}\right) -f\left( \mathbf{x}^{\prime },\mathbf{y}%
	\right) \leq \frac{L}{m}\left\Vert \mathbf{G}\left( \mathbf{x}\right) -%
	\mathbf{G}\left( \mathbf{x}^{\prime }\right) \right\Vert _{Fr},
	\]%
	for all $\mathbf{x},\mathbf{x}^{\prime }\in H^{m}$ and all $\mathbf{y}\in %
	\left[ 0,1\right] ^{m}$. Let $\mathcal{D}$ be a class of nonnegative
	definite operators on $H$. Fix a meta-sample $\left( \mathbf{X},\mathbf{Y}%
	\right) =\left( \left( \mathbf{x}^{1},\mathbf{y}^{1}\right) ,...,\left( 
	\mathbf{x}^{n},\mathbf{y}^{n}\right) \right) $. Then with%
	\[
	\mathcal{F}=\left\{ \left( \mathbf{x},\mathbf{y}\right) \mapsto f\left(
	D^{1/2}\mathbf{x},\mathbf{y}\right) :D\in \mathcal{D}\right\} 
	\]%
	we have 
	\[
	\Gamma \left( \mathcal{F},\left( \mathbf{X},\mathbf{Y}\right) \right) =\frac{%
		2}{T}\mathbb{E}_{\gamma }\sup_{f\in \mathcal{F}}\sum_{t=1}^{T}f\left( 
	\mathbf{x}_{t},\mathbf{y}_{t}\right) \leq 5L\left\Vert \mathcal{D}%
	\right\Vert _{1}\sqrt{\frac{\ln \left( 2mT\right) \left\Vert \hat{C}%
			\right\Vert _{\infty }}{T}}.
	\]
\end{theorem}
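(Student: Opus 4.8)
The plan is to establish this Gaussian-complexity bound by a Gaussian comparison argument that \emph{linearises} the dependence on $D$, and then to invoke the matrix Gaussian-series estimate of \autoref{Theorem Oliveira Tropp} (in the form of \autoref{Corollary Oliveira Tropp}). Throughout I would work conditionally on the fixed meta-sample, let $X_t$ collect the inputs of task $t$ (so $X_t\trans e_i = x_{t,i}$ and the Gram operator of the transformed inputs is $\mathbf{G}(D^{1/2}\mathbf{x}_t)=X_t D X_t\trans$, which is \emph{linear} in $D$), write $\hat C$ for the empirical total covariance operator, and set $\|\mathcal{D}\|_1=\sup_{D\in\mathcal{D}}\trace D$.

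First I would introduce two centred Gaussian processes indexed by $D\in\mathcal{D}$: the process of interest $g_D=\sum_{t=1}^{T}\gamma_t\, f(D^{1/2}\mathbf{x}_t,\mathbf{y}_t)$ with i.i.d.\ standard normals $\gamma_t$, and a comparison process $h_D=\tfrac{L}{m}\sum_{t=1}^{T}\langle \Gamma_t,\,X_t D X_t\trans\rangle$ with independent $m\times m$ matrices $\Gamma_t$ having i.i.d.\ standard normal entries. Using $\mathbb{E}\langle\Gamma_t,M\rangle^2=\|M\|_{Fr}^2$ together with the Lipschitz hypothesis on $f$ and the identity $\mathbf{G}(D^{1/2}\mathbf{x}_t)=X_tDX_t\trans$, a direct computation gives the increment comparison $\mathbb{E}(g_D-g_{D'})^2\le\mathbb{E}(h_D-h_{D'})^2$ for all $D,D'\in\mathcal{D}$. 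By the Sudakov--Fernique comparison inequality, $\mathbb{E}_\gamma\sup_D g_D\le\mathbb{E}_\Gamma\sup_D h_D$, so it suffices to bound the latter.

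Next, rewrite $h_D=\tfrac{L}{m}\,\big\langle\sum_t X_t\trans\Gamma_t X_t,\;D\big\rangle$; since $D\succeq 0$, trace-norm/operator-norm duality bounds this by $\tfrac{L}{m}\|\mathcal{D}\|_1\,\big\|\sum_t X_t\trans\Gamma_t X_t\big\|_\infty$ for every realisation of $\Gamma$, hence $\mathbb{E}_\Gamma\sup_D h_D\le\tfrac{L}{m}\|\mathcal{D}\|_1\,\mathbb{E}_\Gamma\big\|\sum_t X_t\trans\Gamma_t X_t\big\|_\infty$. Now $\sum_t X_t\trans\Gamma_t X_t=\sum_{t,i,j}(\Gamma_t)_{ij}\,x_{t,i}\otimes x_{t,j}$ is a Gaussian series $\sum_k\gamma_k A_k$ of rank-one operators $A_k$ whose common range lies in ${\rm span}\{x_{t,i}:t\le T,\ i\le m\}$, a subspace of dimension at most $mT$; restricting to this subspace keeps the argument dimension-free and lets me apply \autoref{Theorem Oliveira Tropp}/\autoref{Corollary Oliveira Tropp} with $d_1=d_2\le mT$. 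A short computation of the variance parameter, using that the inputs have norm at most one, yields $\sigma^2=\max\{\|\sum_k A_kA_k^\ast\|_\infty,\|\sum_k A_k^\ast A_k\|_\infty\}\le m^2T\,\|\hat C\|_\infty$, so that $\mathbb{E}_\Gamma\big\|\sum_t X_t\trans\Gamma_t X_t\big\|_\infty\le\tfrac52\sqrt{m^2 T\,\|\hat C\|_\infty\ln(2mT)}$. Chaining the inequalities and inserting the normalising factor $2/T$ from the definition of $\Gamma(\mathcal{F},\cdot)$ collapses all the $m$'s and produces exactly $5L\,\|\mathcal{D}\|_1\sqrt{\ln(2mT)\,\|\hat C\|_\infty/T}$.

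The main obstacle is the linearisation step and its interplay with dimension-freeness: one has to choose the comparison process $h_D$ so that after dualisation the \emph{total} empirical covariance $\hat C$ appears (a crude Lipschitz estimate would instead leave per-task operator norms $\|X_t\trans X_t\|_\infty$, which need not average nicely), and one has to restrict the Gaussian series to the $\le mT$-dimensional span of the data \emph{before} applying the matrix concentration bound, so that the ambient dimension $d$ — possibly infinite — is replaced by $2mT$ inside the logarithm. The two computations it relies on (the increment identity and the variance parameter $\sigma^2$) are then routine.
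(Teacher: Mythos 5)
Your proposal is correct and follows essentially the same route as the paper: the paper's proof reaches the linearised bound $\frac{2}{T}\mathbb{E}_\gamma\sup_{D}\frac{L}{m}\sum_{t,i,j}\gamma_{ij}^t\langle x_i^t,Dx_j^t\rangle$ by citing the Slepian/Sudakov--Fernique comparison from the proof of Theorem~7 in \cite{Mau9} (which you spell out explicitly with the matrices $\Gamma_t$), and then performs exactly your H\"older duality step, the variance computation $\sigma^2\le m^2T\Vert\hat C\Vert_\infty$, and the restriction to the $mT$-dimensional span before applying the Oliveira--Tropp bound of \autoref{Corollary Oliveira Tropp}. The only difference is presentational: you prove the comparison step rather than quoting it.
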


\begin{proof}
	The proof follows exactly the proof of Theorem 7 in \cite{Mau9} up to the
	statement of the following inequality (in \cite{Mau9} this is equation (6))%
	\begin{equation}
	\Gamma \left( \mathcal{F},\left( \mathbf{X},\mathbf{Y}\right) \right) \leq 
	\frac{2}{T}\mathbb{E}_{\gamma }\sup_{D\in \mathcal{D}}\frac{L}{m}%
	\sum_{t=1}^{T}\sum_{i,j=1}^{m}\gamma _{ij}^{t}\left\langle
	x_{i}^{t},Dx_{j}^{t}\right\rangle .  \label{NewGaussianBound}
	\end{equation}%
	Define an operator $J_{ij}^{t}$ on $H$ by $J_{ij}^{t}z=\left\langle
	z,x_{i}^{t}\right\rangle x_{j}^{t}$. Then by H\"{o}lder's inequality%
	\[
	\sum_{t=1}^{T}\sum_{i,j=1}^{m}\gamma _{ij}^{t}\left\langle
	x_{i}^{t},Dx_{j}^{t}\right\rangle =\left\langle
	\sum_{t=1}^{T}\sum_{i,j=1}^{m}\gamma _{ij}^{t}J_{ij}^{t},D\right\rangle
	_{2}\leq \left\Vert \sum_{t=1}^{T}\sum_{i,j=1}^{m}\gamma
	_{ij}^{t}J_{ij}^{t}\right\Vert _{\infty }\left\Vert D\right\Vert _{1}.
	\]%
	Now 
	\begin{eqnarray*}
		\left\Vert \sum_{t=1}^{T}\sum_{i,j=1}^{m}J_{ij}^{t\ast
		}J_{ij}^{t}\right\Vert _{\infty } &=&\sup_{\left\Vert u\right\Vert \leq
		1,\left\Vert v\right\Vert \leq 1}\sum_{t=1}^{T}\sum_{i,j=1}^{m}\left\langle
	J_{ij}^{t}u,J_{ij}^{t}v\right\rangle  \\
	&=&\sup_{u,v}\sum_{t=1}^{T}\sum_{i,j=1}^{m}\left\langle
	u,x_{i}^{t}\right\rangle \left\langle v,x_{i}^{t}\right\rangle \left\Vert
	x_{j}^{t}\right\Vert ^{2} \\
	&\leq &m\sup_{u,v}\sum_{t=1}^{T}\sum_{i=1}^{m}\left\langle
	u,x_{i}^{t}\right\rangle \left\langle v,x_{i}^{t}\right\rangle
	=m^{2}T\left\Vert \hat{C}\right\Vert _{\infty }.
\end{eqnarray*}%
The same bound holds for the norm of the adjoint. All the $J_{ij}^{t}$
operate on the $mT$-dimensional subspace generated by the $x_{i}^{t}$, so by
the corollary to the Olivera-Tropp inequality (Corollary \ref{Corollary
	Oliveira Tropp}) with $d_{1}=d_{2}=mT\,\ $and $\sigma ^{2}=m^{2}T\left\Vert 
\hat{C}\right\Vert _{\infty }$%
\[
\mathbb{E}\left\Vert \sum_{t=1}^{T}\sum_{i,j=1}^{m}\gamma
_{ij}^{t}J_{ij}^{t}\right\Vert \leq \frac{5}{2}\sqrt{m^{2}T\left\Vert \hat{C}%
	\right\Vert _{\infty }\ln \left( 2mT\right) }.
\]%
Division by $mT$ and (\ref{NewGaussianBound}) give the conclusion.
\end{proof}

\end{appendices}

\end{document}